\documentclass[10pt]{article} 
\usepackage[preprint]{tmlr}

\usepackage{url}

\usepackage{microtype}
\usepackage{graphicx}
\usepackage{subcaption}
\usepackage{booktabs} 
\usepackage{tabularx}
\usepackage{array}
\usepackage[table]{xcolor}

\usepackage{hyperref}

\usepackage{pifont}
\usepackage{amsmath}
\usepackage{amssymb}
\usepackage{mathtools}
\usepackage{amsthm}

\usepackage[capitalize,noabbrev]{cleveref}

\theoremstyle{plain}
\newtheorem{theorem}{Theorem}[section]

\newtheorem{lemma}[theorem]{Lemma}

\theoremstyle{definition}
\newtheorem{definition}[theorem]{Definition}

\theoremstyle{remark}

\newtheorem{notation}{Notation}

\usepackage{tikz}
\usetikzlibrary{
    arrows.meta,
    positioning,
    fit,
    calc,
    backgrounds,
    shapes.geometric
}

\tikzset{
    rv/.style={
        circle,
        draw=black,
        thick,
        fill=white,          
        minimum size=8mm,    
        inner sep=2mm
    },
    kernel/.style={
        rectangle,
        draw=black,
        thick,
        fill=white,
        minimum size=8mm,    
        inner sep=2mm
    }
}

\usepackage{enumitem}

\newcommand{\R}{\mathbb{R}}

\newcommand{\Y}{\mathcal Y}
\newcommand{\X}{\mathcal X}
\newcommand{\T}{\mathcal T}
\newcommand{\ZZ}{\mathcal Z}
\newcommand{\V}{\mathcal V}

\newcommand{\RR}{\mathcal R}

\newcommand{\rv}[1]{\mathsf{#1}}
\newcommand{\p}[1]{p_{#1}}
\newcommand{\EV}[2]{\mathbb{E}_{#1}\left[#2\right]}

\newcommand{\ProbDists}[1]{\mathcal{P}\left(#1\right)}
\newcommand{\ProbOf}[2]{P_{#1}\left(#2\right)}
\newcommand{\fnset}[2]{{#2}^{#1}}
\newcommand{\proc}{\mathtt}

\newcommand{\gv}[2]{\left. #1 \middle | #2 \right.}

\newcommand{\pushforward}[2]{{#1}_{\sharp}{#2}}
\newcommand{\kernelover}[2]{{#1}\rightsquigarrow{#2}}
\newcommand{\MarkovKernels}[2]{\mathcal{M}\left(#2\middle | #1\right)}

\newcommand{\Err}[2]{\mathrm{Err}_{#1}\left(#2\right)}

\newcommand{\deficiency}[3]{\delta_{#1}\left({#2}; {#3}\right)}
\newcommand{\divergence}[3]{d_{#1}\left({#2}; {#3}\right)}
\newcommand{\condDeficiency}[4]{\delta_{#1}\left({#2}; {#3} \middle | {#4} \right)}
\newcommand{\condDivergence}[4]{d_{#1}\left({#2}; {#3} \middle | {#4} \right)}
\newcommand{\bregman}[4]{B_{#1}\left({#2}; {#3}; {#4} \right)}
\newcommand{\condBregman}[5]{B_{#1}\left({#2}; {#3}; {#4} \middle | {#5} \right)}

\newcommand{\ux}{v}
\newcommand{\UX}{\V}
\newcommand{\uy}{z}
\newcommand{\UY}{\ZZ}
\newcommand{\duy}{w}
\newcommand{\fapprox}[1]{\hat{#1}}
\newcommand{\Dapprox}[1]{\tilde{#1}}

\newcommand{\representationErr}{\Delta_\rho}
\newcommand{\uDef}{\Delta_u}
\newcommand{\fDef}{\Delta_f}
\newcommand{\excessFDeficiency}{\Phi_f}
\newcommand{\fModelErr}{\varepsilon_f}
\newcommand{\dErrU}{\Delta_d^u}
\newcommand{\dErrF}{\Delta_d^f}
\newcommand{\excessdErr}{\Phi_{d}}

\newcommand{\DErrU}{\Delta_{\proc D}^{u}}
\newcommand{\DErrF}{\Delta_{\proc D}^{f}}
\newcommand{\excessDErr}{\Phi_{\proc D}}

\newcommand{\BregmanF}{B_{\proc{D}}^f}

\newcommand{\rhoD}{{\rho_{\proc D}}}

\newcommand{\rE}{{r_{\proc E}}}
\newcommand{\rD}{{r_{\proc D}}}

\newcommand{\inflationTerm}{\iota_{\ell \leftarrow \ell_u}}
\DeclareMathOperator*{\esssup}{ess\,sup}

\newcommand{\Dstar}{\proc{D}^{\star}}
\newcommand{\Ybar}{\mathcal Y}

\usepackage[textsize=tiny]{todonotes}

\title{A Theoretical Framework for Understanding and Building ``Any-task inference'' Capable Foundation Models}

\author{%
    \name Matthew McDermott \email mm6677@cumc.columbia.edu \\
    \addr Columbia University Department of Biomedical Informatics
    \AND
    \name Florent Pollet \\
    \addr Columbia University Department of Biomedical Informatics
    \AND
    \name Payal Chandak \\
    \addr Harvard Medical School
    \AND
    \name Gregory Kondas\\
    \addr Columbia University Department of Biomedical Informatics%
    \AND Liat Friedman Antwarg\\
    \addr Harvard Medical School%
    \AND Noa Dagan\\
    \addr Clalit Health Services and Ben Gurion University%
    \AND Isaac Kohane\\
    \addr Harvard Medical School%
}

\def\month{MM}  
\def\year{YYYY} 
\def\openreview{\url{https://openreview.net/forum?id=XXXX}} 

\begin{document}

\maketitle

\begin{abstract}
In this work, we provide a formal framework and mathematical definition for the subset of foundation models focused on enabling the property of ``any-task inference'' (ATI) (similar to zero-shot inference), rather than those designed for pre-training fine-tuning. 
To achieve this mathematical definition, we formalize the notion of a ``space of tasks of interest,'' which we include in the problem domain, then provide an interface-based formal definition for ATI foundation models as those that map samples of data $x$ and tasks $t$ to predictions over the space of labels directly.
While simple, this definition allows us to show that all ATI foundation models can be decomposed through a ``universal task reduction'' lens into encoder, model, and decoder aspects. We can leverage this decomposition to also decompose foundation model error into terms corresponding to different aspects of this reduction, stratified into a collection of ``design terms,'' which are theoretical properties that can inform one of whether a particular proposed ATI foundation model can achieve optimality or not due to the structure of the proposed task reduction, and ``realized model excess'' terms which capture interactions between learned and programmatic components of foundation model systems.
Lastly, we can use this perspective to offer two ``prototypical'' recipes for realizing ATI foundation models in diverse domains, which we term generative and discriminative variants of ATI foundation models.
This exercise is grounded throughout by 5 real-world examples of foundation models in different settings, varying greatly in how these capabilities are realized, showing the applicability of this definition to real-world settings.
\end{abstract}

\section{Introduction}
\label{sec:introduction}
This work aims to provide a formal definition of the subset of ``foundation models'' that are designed to enable the practical property typically called ``zero-shot inference,'' which we recognize more directly as the ability to use the foundation model to perform inference for any arbitrary task in a given domain, on any input sample, with no additional training required. A formal definition for this class of foundation models will reduce ambiguity when this term is used in research, allow for more comprehensive theoretical understanding of these foundation models, and help us see how we can build them in diverse domains. 

\subsection{Focusing on Foundation Models that enable Direct ``Any-task inference.''}
The term ``foundation model'' was originally introduced by \citet{crfm_fms} as
``\emph{models ... trained on broad data (generally using self-supervision at scale) that can be adapted to a wide range of downstream tasks}''~\citep{crfm_fms}. This definition has since been applied to many different kinds of models that share different traits and usage goals. In this work, we assume that the key property in this definition, and the property that matters most for something to be a ``foundation model'' is that of its data-efficient utility on a ``wide range of downstream tasks.''
In the modern era, this utility is best epitomized through models that permit \emph{immediate, no-fine-tuning required inference} across an \emph{arbitrary range of reasonable tasks over a domain of interest.} We will use the term ``any-task inference (ATI) capability'' to refer to this ability -- being able to immediately perform inference on an arbitrary task of interest without any additional fine-tuning.\footnote{We use this term rather than the term ``zero-shot inference'' because our primary interest is not in \emph{how a model was trained} (i.e., what particular style of learning objective was used) but rather in \emph{what you can do with it at inference time} (i.e., make predictions for arbitrary tasks of interest), though in many cases these two capabilities will intersect.} 

In this work, we will focus our analyses specifically on foundation models that are capable of ATI, which we will call ATI foundation models, rather than other kinds of foundation models. In so doing, we are explicitly not studying other notions of what may make something a ``foundation model'' that are commonly used today. In particular, in this work we will take the following positions:
\begin{enumerate}
    \item We will assume that foundation models need not be generative models. Instead, the focus on downstream task utility is paramount, not their ability to estimate the likelihood of or generate samples from the data domain.
    \item We will assume that foundation models need not be trained with any particular style of self-supervision or neural network architecture. What makes foundation models unique is how they can be used at inference time, not how they were trained.
    \item We will focus our analyses on ATI foundation models, \emph{rather than foundation models that require fine-tuning and do not permit ATI.}
    \item We will assume that foundation models need not solve problems of distribution shift over their domains, provided they still show any-task adaptability to diverse downstream tasks in the domain of interest. Critically, as we will formalize later in the work, any-task adaptability itself need not require distribution shift, so this is not in conflict.
\end{enumerate}

Note that we do not argue that these topics are not interesting, useful, and important in their own right -- just that we can separate them from the concept of an ``ATI foundation model,'' (and arguably from the term ``foundation model'' more generally) and reserve this new term to refer more specifically to these any-task capable models. The established terms for each of these capabilities (e.g., generative modeling, self-supervised modeling, transfer learning, and distribution shift) can instead be used to refer to those specific areas individually. Note that we may at times just use the term ``foundation model'' in the text, but we will near universally use this to mean an ATI foundation model unless otherwise specified.

\subsection{This Definition Permits Meaningful Theoretical Contributions}
With this narrowed focus, we can provide a formal mathematical definition of ATI foundation models that permits us to make several theoretical advances. Concretely, to summarize the contributions of this work:
\begin{enumerate}
    \item We re-introduce the notion of a ``distribution of tasks'' over a space $\T$ that is jointly distributed with one's data over $\X$ and labels over $\Y$.\footnote{To the best of our knowledge, this concept was first introduced in~\cite{baxter}.} We further introduce the notion of stochastic representations $\rho$ of tasks that map from the (potentially intractable and unobserved) space $\T$ to a usable space for modeling.
    \item Given this joint domain, we define any-task inference capable foundation models as \emph{procedures} that operate jointly over pairs of data samples and tasks to predict the target label: $\proc{P}: \X \times \T \to \ProbDists{\Y}$.
    \item We show that any foundation model under this definition can be decomposed into ``encoder'', ``model'', and ``decoder'' stages, which formalize the view that foundation models enable any-task inference by performing an algorithmic reduction in the space of tasks to some ``universal task.''
    \item We further show that this decomposition gives rise to a clear decomposition of the sources of error any foundation model has, across terms dependent on each of these components, including the following error terms:
    \begin{itemize}
        \item Irreducible additional uncertainty due to low fidelity representations of the input task.
        \item Limitations in how effectively the universal task can approximate the true task.
        \item Error in how effectively the model component actually models the universal task.
        \item Error arising from the choice of labeling function used to translate universal-task outputs into downstream labels.
        \item Error arising from imperfect approximation of the target decoding distribution by the realized decoder.
    \end{itemize}
    \item We give two practical recipes for how to build foundation models in diverse domains.
    \item We show that foundation model evaluation is mathematically analogous to evaluation in the context of algorithmic fairness, yielding identical impossibility criteria. 
\end{enumerate}

\subsection{Running examples}
We will use several running examples of ATI foundation models in the literature today throughout this work, including the following:
\paragraph{Large language models}
We will consider large language models (LLMs) as ATI foundation models in the sense that, through prompting and/or specific decoding strategies, diverse (and arguably all) tasks in natural language processing (NLP) can be reframed as next-token prediction problems, and thus a well trained LLM arguably should be useful for solving arbitrary examples of such tasks in NLP. Examples include~\cite{GPT-2,GPT-3,GPT-4}. We separate these any-task capable examples from PT/FT LLMs such as BERT~\citep{BERT}, which we will not consider here (though we note that masked language models can in theory be used for any-task inference as well, though this is less well explored in practice).

\paragraph{Autoregressive Foundation Models for Electronic Health Record Data}
An emerging area of foundation models today concerns autoregressive (AR) sequence models applied over longitudinal electronic health record (EHR) datasets (typically expressed in event-stream format, e.g., in the MEDS standard~\citep{MEDS} which is then tokenized into some categorical sequence~\citep{tokenization_guo_fries_sung,tokenization_lee_solo_bealieu-jones,tokenization_rafi_rajna}). These models appear superficially similar to LLMs, but perform any-task inference in a distinct pathway. Whereas LLMs use prompting to modify the ``input'' for an any-task problem in some controlled fashion, EHR AR FMs do not modify the input (a stream of patient level data) at all (and are generally not capable of prompting). Instead, they simply forecast possible ``simulated futures'' of the patient data, then infer the task label via some statistical form (e.g., Monte Carlo sampling) over those simulated futures directly. Examples include~\cite{ETHOS,curiosity}.

\paragraph{Prior Fitted Networks}
Another emerging area of foundation models concerns prior fitted networks~\citep{PFNs}, which frame foundation models as a Bayesian inference problem to learn arbitrary posteriors given samples of that posterior learned through a dataset derived from a prior over distributions. Recent works such as TabPFN~\citep{tabpfn} have realized that these models can be used as powerful foundation models over, e.g., tabular data, though further recent work has also applied this method to other data modalities, such as imaging data~\citep{woerner2026universalalgorithmimplicitlearning}. 
Unlike other foundation model variants, the input for a PFN looks quite different from a traditional model that maps $x$ to $y$. In particular, PFNs explicitly take as input a sequence of sampled pairs $(x, y)$ that represent the task of interest at inference time, followed by a final, unpaired $x$ sample for which the target label $y$ should be inferred. During pre-training, these sampled sequences are generated from a defined ``prior'' over possible tasks, enabling many diverse relationships between $x$ and $y$ to be learned. This makes PFNs extremely flexible, but also imposes computational resource limitations on when they can be used given a sequence of many pairs $(x, y)$ must fit on the GPU for a single sample. While one may argue that this sequence of $(x, y)$ pairs used as input to the model violates the ``any-task'' contract of interest here, we will permit this, and justify it more formally later in the work.

\paragraph{Task-conditioned or Multi-task Foundation Models over an Explicit Task Space}
Classically, multi-task learning~\citep{ehrpt} or even single-task (but typically multi-class) supervised pre-training have been widely used in representation learning settings~\citep{imagenet}. More recently, many variants of multi-task pre-training algorithms have been presented as ``foundation models'', in particular in health domains~\citep{motor,ora}. A superficially related approach, in which task-conditioned labels are produced, rather than producing labels for multiple tasks simultaneously, has also recently been proposed as an alternative to AR foundation models in the health domain~\citep{everyquery}. 

In multi-task models, a model outputs a separate prediction for a finite set of tasks simultaneously. In task-conditioned models, the model takes as input a sample $x$ but also some ``representation of the task of interest'' in some structured form, then outputs a prediction solely for that task and input. In this way, it takes many inference steps for a task-conditioned model to produce the same output as a multi-task model, but the task-conditioned model can model a much larger (potentially infinite) set of tasks simultaneously, in contrast to the finite set of tasks expressible by a multi-task model. These models realize ``any-task'' capabilities over the structured space of tasks over which labels are produced. If either this space is sufficiently broad or a procedure on how to adapt other tasks into this structured space of tasks is provided, it is reasonable to consider these models as ``foundation models'' in our more restricted sense (though we note that often these models are presented in a manner requiring PT/FT or transfer learning).

A summary of the key characteristics of these examples is provided in Section~\ref{sec:examples} and Table~\ref{tab:example_fm_decompositions}.

\subsection{Outline of the Rest of the Work}

Our notation uses precise conventions for typeface and notational elements in several places: for example, we write random variables in sans-serif typeface and distinguish them from their values, write Markov kernels as $\rho : \mathcal{T} \rightsquigarrow \mathcal{R}$, and use push-forward distributions $f_\sharp q$. These conventions, together with the standing assumptions we rely on, are collected in Appendix~\ref{sec:notation}. \textbf{We strongly recommend that readers review Appendix~\ref{sec:notation} before proceeding to the rest of the work, as the formal development begins there and uses this notation without further introduction.}

In the rest of this work, we will first formalize the notion of the ``distribution of tasks'' and of ``task representations'' in Section~\ref{sec:task_space_and_representations}. Then, in Section~\ref{sec:defining_foundation_models}, we will formally define ATI foundation models
and demonstrate that this definition universally permits a decomposition through a ``universal task reduction'' lens. Building on this definition, in Section~\ref{sec:FM_error_sources}, we will decompose the error of any given foundation model into meaningful terms dictated by the universal task decomposition. In Section~\ref{sec:FM_recipes}, we will further show that we can define two concrete recipes to build foundation models, building on the same basic assumptions and describe how they translate error across these different error sources. Finally, in Section~\ref{sec:FM_evaluation}, we will discuss evaluating foundation models, and show that it is mathematically equivalent to evaluation of supervised models under fairness constraints.

\section{Formalizing Tasks in Our Notation and Problem Formulation}
\label{sec:task_space_and_representations}
A key ingredient of our eventual ATI foundation model definition will be the elevation of a ``task'' to a part of the input to the procedure, and thus to the probabilistic formulation of the problem. In this section, we provide the necessary mathematical and notational set-up to do this formally.

\subsection{The Extended Task-Data Domain}
Let $\X$ be some data domain. Let $\Y$ be a label domain, such as $\{0, 1\}$ for binary classification, though $\Y$ need not be discrete.
Typically, in machine learning, we assume that we have random variables $\rv x, \rv y$ over $\X \times \Y$ that define our space of interest. 
In this work, we argue this view is insufficient. As was first introduced (to the best of our knowledge) by \citeauthor{baxter}, we argue that we should also include a notion of a standard Borel space $\T$ of ``tasks'' requiring us to predict $\Y$ from $\X$\footnote{Note this relies on assuming that all tasks share the same label space -- we will revisit this later but it is a more comfortable assumption than it seems at first glance.}. This notion requires us to similarly extend our (joint) distribution to $\rv x, \rv y, \rv t$ over data inputs $\rv x$, labels $\rv y$ and the \emph{task to which that label corresponds} $\rv t$. We will explicitly assume that this joint random variable is well defined for any valid domain, and further that the random variable $\rv t$ always has full support over $\T$.


To make our notion of a task $t \in \T$ clear, we will identify a task $t$ with its associated posterior distribution of labels given data by assuming that there exists a jointly measurable kernel $\eta: \kernelover{\X \times \T}{\Ybar}$, such that $\gv{\rv y}{\rv x = x, \rv t = t} \sim \eta(\cdot | x, t)$.
We will further assume that the mapping $t \mapsto \eta_t = \eta(\cdot | \cdot, t)$ is injective -- i.e., each task defines a unique posterior distribution of $\gv{\rv y_t}{\rv x}$. Note that while, mathematically, we assert that $\eta$ exists, we are not claiming that it is ever known -- indeed, if $\eta$ were known, we would not need to do any learning as it defines the true posterior distributions.

\paragraph{Distribution Shift in the Task Space}
Note that formalizing this space of tasks $\T$ and distribution over tasks $\rv t$ implies we may observe distribution shift over tasks between training and evaluation. This is an important area of study, but not, we argue, essential to ``any-task inference'' capabilities or to ``foundation models,'' and we leave it as an area of future work.

\paragraph{Existence of Trivial Task Distributions}
Note that we are not claiming that there is only ``one'' true task space $\T$ or task distribution $\rv t$. Instead, just as we have many data domains $\X$ or data distributions, we can have many task domains or distributions, and define new ones to suit certain applications or contexts. This does mean that we can define some ``trivial'' task spaces -- e.g., a single element task distribution $\T= \{t\}$. This is valid in much the same way that it is valid to define my data domain as a single point $\X = \{x\}$. It is technically true, but neither interesting nor useful. Given this flexibility, specifying $\T$ is therefore an important part of specifying the domain of interest for any claims made about associated foundation models.

\paragraph{Single-task Models in this Extended Domain}
Note that this formulation does not preclude us from studying single-task models for a given task $t \in \T$. In this setting, we are interested solely in modeling $\rv x_t, \rv y_t$ (in either a generative or discriminative $\gv{\rv y_t}{\rv x}$ setting). In such a setting, we can safely discard $t$ entirely, define our ``effective'' variables $\rv x, \rv y$ to be given by $\rv x_t, \rv y_t$, and model as normal without issue. 

\paragraph{Any-task Inference Capabilities in this Extended Domain}
If we are not interested in a single-task model, however, and instead interested in models that display ``any-task inference capabilities,'' then this extended domain is precisely the machinery we need to formalize this capability. In particular, for a model to support any-task inference over $\X, \T, \Ybar$ it is clear that this model must permit, through some procedural form, one to map \emph{any} pair $(x, t)\in \X \times \T$ to some prediction distribution over $\Ybar$ -- e.g., the model must be usable through a procedure that lives in $\fnset{(\X \times \T)}{\ProbDists{\Ybar}}$.

Critically, note here that we are not asserting anything about the model's actual performance. It may, for example, yield terrible performance over every task $t \in \T$, or be constant in its predictions across all $t \in \T$, and that need not violate its ability to perform ``any-task inference.'' Instead, the claim we make here is operational -- if someone describes their model as being able to make inference over any task within this extended domain, then there must be a mechanism by which you can perform inference for any task $t \in \T$, which we can formalize in this functional form.

Note that we are also not requiring that $\T$ be directly observable for a model to support any-task inference -- instead, the only requirement is that if you have access to a task $t \in \T$ or some representation of such a task, you must somehow be able to use the model to make inference for that task $t$. Note that this definition shares many similarities with the informal definition of ``zero-shot'' learning; for more commentary on why we use ``any-task inference'' vs ``zero-shot learning'' here, see Appendix~\ref{appendix:any-task-vs-zero-shot}.

\subsection{Task Representations}
In general, $\T$ will be an unobserved, latent space which is not represented in a tractable format. This will make it challenging to operationalize how we will realize these ``procedures'' defined on $\X \times \T$ referenced above. Instead, in practice, where ``tasks'' are used today in an operational manner, we instead must ``represent'' a task $t \in \T$ in some more tractable manner.
We formalize this through the notion of a Markov Kernel (i.e., a stochastic function) from $\T$ into a space $\RR$ via $\rho: \kernelover{\T}{\RR}$. As this kernel defines a probability distribution for each $t \in \T$, we can introduce the notion of a random variable $\rv r$, jointly distributed with $\rv x, \rv y, \rv t$ such that the conditional random variable $\rv r_t = \gv{\rv r}{\rv t = t}$ is given precisely by the action of $\rho$ on $t$. If we wish for $\rho$ to be given by a deterministic function $g:\T \to \RR$, we can represent that with a point mass distribution, such that $\rho(t) = \delta_{g(t)}$. In this case, we may (at a slight abuse in notation) use $\rho(t) = g(t)$ as well when this distinction is clear from context. Note that even in the case that $\rho$ is deterministic, $\rv r$ may still be stochastic given the stochasticity of $\rv t$. Below, we list some general properties and/or definitional requirements of task representations (all of which are assumed to be held in the context of a joint domain over $\rv x, \rv t, \rv y$). 

Given a representation $\rho: \kernelover{\T}{\RR}$, we can also define several induced kernels that will help with further results. In particular, Definition~\ref{def:posterior_task_kernel} defines $\kappa_\rho: \kernelover{\RR}{\T}$ to be the posterior kernel of tasks given representations and Definition~\ref{def:induced_kernel_phi} defines the kernel $\phi_\rho: \kernelover{\X \times \RR}{\Ybar}$ to be the distribution of labels conditioned on a data sample and a represented task (much as $\eta$ is the kernel for true tasks).

\begin{definition}
    \label{def:posterior_task_kernel}
    Given extended domain $\X, \T, \Y$, $\rho: \kernelover{\T}{\RR}$ a representation of $\T$, and associated joint random variables $\rv x, \rv t, \rv y, \rv r$, we can define the induced kernel $\kappa_\rho: \kernelover{\RR}{\T}$ to be the posterior distribution of $\rv t | \rv r$.
\end{definition}

\begin{definition}
    \label{def:induced_kernel_phi}
    Given extended domain $\X, \T, \Y$, $\rho: \kernelover{\T}{\RR}$ a representation of $\T$, and associated joint random variables $\rv x, \rv t, \rv y, \rv r$, we can define the induced kernel $\phi_\rho: \kernelover{\X \times \RR}{\Ybar}$ to be the induced distribution of $\rv y | \rv x, \rv r$.
\end{definition}

To maximize the utility of these task representations, there are some basic properties, assumptions, and definitions we will introduce.
Concretely, we take as a definitional assumption that $\rv r \perp \rv x, \rv y | \rv t$. Similarly, if we have two distinct representations $\rho_1$ and $\rho_2$, then we will likewise assume that $\rv r_1 \perp \rv r_2 | \rv t$. Given two such representations, we may also define a composite representation $\rho = \rho_1 \otimes \rho_2$ such that $\rho: \kernelover{\T}{\RR_1 \times \RR_2}$ yields a random variable $\rv r$ taking values $(r_1, r_2)$ such that each component of $\rv r_t$ is independent of one another and defined by $\rho_1(\cdot | t)$ and $\rho_2(\cdot | t)$ respectively.
Additionally, some representations $\rho$ may be ``degenerate'' in the sense that they may offer no information about the task in question, such that $\rv r \perp \rv t$. For example $\rho(t) = \delta_0$ ($\rho$ maps all tasks to a point-mass at $0$) is degenerate.

Conversely, a representation $\rho$ will be called ``complete'' if it contains all the information about the task. Concretely, $\rho$ is complete if $\kappa_\rho(\cdot | \rv r) = \delta_{\rv t}$ almost everywhere. For example, $t \mapsto \delta_t$ is complete.

A graphical model of the relationship between data, tasks, labels, and representations can be shown in Figure~\ref{fig:graphical_model_tasks}.

\begin{figure}
    \centering
    \begin{tikzpicture}
        \node[style=rv](x){$\rv x$};
        \node[style=rv](t)[above right=of x]{$\rv t$};
        \node[style=rv](y)[below right=of t]{$\rv y$};
        \node[style=rv](r)[right=of t]{$\rv r$};

        \draw[->] (x) -- (t);
        \draw[->] (t) -- (y);
        \draw[->] (x) -- (y);
        \draw[->] (t) -- node[midway, above] {$\rho$} (r);
    \end{tikzpicture}
    \caption{A graphical model of data samples $\rv x$, tasks $\rv t$, labels $\rv y$, and representations of tasks induced by $\rho$, $\rv r$.}
    \label{fig:graphical_model_tasks}
\end{figure}

\paragraph{Examples}
While it may seem counterintuitive at first, all task spaces permit tractable representations through a variety of means. Concretely, as we will explore in more depth below, each of our running examples leverages distinct tractable representations of tasks to achieve their foundation model behaviors. In particular, LLMs represent tasks via free-text prompts and natural language questions. AR foundation models for EHR data represent tasks in the form of structured programs they use to extract labels from generated trajectories during inference. PFNs represent a task $t$ stochastically via samples (assumed to be iid) drawn from the distribution of $\rv x, \rv y | \rv t = t$, which are directly passed to models as a sequence of $(x, y)$ pairs. Task-conditioned, direct prediction foundation models like EveryQuery \citep{everyquery} directly represent tasks via structured programs (much like AR EHR FMs use), and, finally, multi-task foundation models consider only a discrete space of tasks and represent them via categorical indices.

\subsection{Irreducible Error due to Task Representation Ambiguity}
It is important to note that $\rho$ will often produce ``lossy'' representations of tasks. In particular, it is clear by the data processing inequality and the fact that $\rv x, \rv y \to \rv t \to \rv r$ is a Markov chain, then $I(\rv x, \rv y ; \rv t) \ge I(\rv x, \rv y; \rv r)$. In other words, a representation of a task has less mutual information with the joint distribution of $\rv x, \rv y$ than does the true task.

We can generalize this notion slightly to apply to arbitrary generalized entropies and divergences by introducing the strictly proper scoring loss $\ell$ over $\Ybar$ and the associated generalized entropies introduced in Appendix~\ref{sec:notation}, through Definition~\ref{def:representation_error}.

\begin{definition}
    \label{def:representation_error}
    Given $\ell$, $\rho$ as defined above, let $\representationErr$ (implicitly dependent on $\ell$, which we will assume is specified in context) to be the increase in the generalized $\ell$-entropy $h_\ell$ of $\gv{\rv y}{\rv x}$ when conditioning on $\rv r$ rather than $\rv t$:
    \begin{align*}
        \representationErr
          &= h_\ell(\gv{\rv y}{\rv x, \rv r}) - h_\ell(\gv{\rv y}{\rv x, \rv t}) \\
          &= h_\ell(\gv{\rv y}{\rv x, \rv r}) - h_\ell(\gv{\rv y}{\rv x, \rv t, \rv r}) \\
          &= I_\ell(\gv{\rv y; \rv t}{\rv x, \rv r}) \\
          &\ge 0
    \end{align*}
\end{definition}

Definition~\ref{def:representation_error} shows that $\representationErr$ is precisely (1) the amount of information that remains between $\rv y$ and $\rv t$ when conditioning on $\rv r$ and $\rv x$, and (2) the increase in Bayes-optimal error were we forced to predict $\rv y$ from $\rv x$ and $\rv r$ alone vs. if we could use the true $\rv t$ directly.

Note that we can see that if $\rho$ is complete, then $h_\ell(\gv{\rv y}{\rv x, \rv r}) = h_\ell(\gv{\rv y}{\rv x, \rv t})$, so $\representationErr = 0$. Conversely, if $\rho$ is degenerate, then $\representationErr = I_\ell(\rv y; \rv t|\rv x)$ and is maximized. Additionally, suppose that $\rho = \rho_1 \otimes \rho_2$. Then, by independence of $\rho_1$ and $\rho_2$ and the construction of the composite representation $\rho$, we have that $\representationErr \le \Delta_\ell(\rho_1)$ and $\representationErr \le \Delta_\ell(\rho_2)$.

\paragraph{Example} Consider two tasks: predicting whether a patient will be readmitted within 7 days or within 30 days. If both are represented simply as “predict readmission,” the time horizon is lost. A readmission on day 20 would have different labels under the two tasks. Thus, even a Bayes-optimal predictor cannot fully resolve this ambiguity, resulting in irreducible representation error \(\Delta_\rho\).

\section{Defining Foundation Models}
\label{sec:defining_foundation_models}
With this machinery in place, we can now define an ATI foundation model formally. In fact, in this work, we will make the property of ``any-task inference capable'' identical with being a ``foundation model,'' as this aligns precisely with the desiderata introduced in the Introduction and the spirit of maximally data-efficient adaptability introduced by \citeauthor{crfm_fms} We will first introduce a direct definition based on any-task inference capability in Subsection~\ref{subsec:direct_def}, then further show that we can equivalently define foundation models as reductions to a universal task and establish some basic properties of this definition in Subsection~\ref{subsec:reduced_def}.

\subsection{Formal Definition of an ATI Foundation Model}
\label{subsec:direct_def}
We first define ATI foundation models formally in Definition~\ref{def:foundation_model}.

\begin{definition}[Foundation Model]
    \label{def:foundation_model}
    Let $\X, \T, \Y$ be an extended data, task, label domain of interest, satisfying the normal, previously mentioned assumptions. We will define an ``ATI foundation model'' over this domain to be a procedure\footnote{Note that we use the term ``procedure'' rather than ``function'' or ``model'' here simply to reflect that this may, operationally, involve classical, algorithmic and modeled components, not to differentiate this mathematically from any other ordinary function.} $\proc{P}: \X \times \T \to \ProbDists{\Y}$.
    Implicitly, if one is interested in an ATI foundation model over this domain, but is only permitted to access tasks through a single-draw of a representation $\rho: \kernelover{\T}{\RR}$, then we should understand this as a procedure over $\X \times \RR$ instead of $\X \times \T$ to reflect this single-draw requirement.
\end{definition}

Much as we said for ``any-task inference capability,'' Definition~\ref{def:foundation_model} does not assert a specific performance guarantee, nor does it require generalization outside of the specified domain $\X, \T, \Y$. It does, however, assert that there must be an operational manner by which one can map an input $x$ and a given task $t$ to a predicted distribution over $\Y$. Note that we also do not assert that the foundation model will be able to access the raw task $t$ -- instead, as we will introduce later, we will assume there is a specific representation of the task that the foundation model can leverage, which will factor into eventual error decompositions.

While Definition~\ref{def:foundation_model} may seem weak, we will see in this section that it permits meaningful theoretical and practical results. Concretely, in what remains of this section we will first remark that this definition trivially demonstrates what the ``Bayes-optimal'' foundation model is over this domain, and the implications of this, then we will show that all foundation models under this definition can be decomposed into terms under a ``universal-task reduction'' lens.

\paragraph{Bayes-optimal Foundation Models}
It is clear from Definition~\ref{def:foundation_model} that over a given domain with joint random variables $\rv x, \rv t, \rv y$, the Bayes-optimal foundation model $\proc P^*$ under any strictly proper scoring loss $\ell$ is given by the true posterior of $\rv y$ given $\rv x, \rv t$:

\[\proc P^*: (x, t) \mapsto \eta(x, t).\]

We can also use this to see, near trivially, that the best possible foundation model cannot outperform the best possible single-task model for a task $t \in \T$, both in general and when under matched model-permissibility constraints, thereby formally clarifying that foundation model improvements over supervised models must come from other mechanisms (Appendix~\ref{appendix:fms_v_supervised_learning}).

\paragraph{Foundation Models over Represented Task Spaces}
Definition~\ref{def:foundation_model} defines a foundation model over an extended $\X, \T, \Ybar$ domain and over the $\X, \RR, \Ybar$ space where $\RR$ is the representation space of some representation $\rho: \kernelover{\T}{\RR}$ in the case that the user is only allowed to access a single-draw of $\rho$ for any task $t \in \T$. Note that even in this single-draw case, we still consider the foundation model to be ``over the $\X, \T, \Y$ domain'' not for the purposes of its definition/API, but instead for the purposes of optimality, as were the user to use a different representation $\rho': \kernelover{\T}{\RR'}$ for the root space of $\X,\T,\Y$, they might be able to obtain a ``better'' model than is obtainable under $\rho$ in a manner that is lost if we consider optimality to be in the $\X, \RR, \Y$ space. We will show later that the cost of a given choice $\rho$ in this single-draw case is precisely the irreducible representation error $\representationErr$ as defined in Definition~\ref{def:representation_error}.

\paragraph{Example} A multi-task model predicting a fixed set of clinical outcomes satisfies Definition~\ref{def:foundation_model} when $\T$ is restricted to those outcomes: given a task index, the corresponding prediction can be returned without retraining. However, this guarantees no inference capability beyond the specified task space, emphasizing that the expressiveness of $\T$ is central to interpreting foundation model claims.

\subsection{Foundation Models as Universal Task Reductions}
\label{subsec:reduced_def}
Building on only Definition~\ref{def:foundation_model}, we can decompose any foundation model into a procedure that leverages a form of \emph{reduction} in the space of tasks to achieve any-task inference capabilities. Concretely, we will break a foundation model $\proc P$ down into an ``encoder'', a ``universal task'' and associated predictive ``model'', and a ``decoder'', such that the encoder, limited by an encoder-specific task representation, converts inputs to $\proc{P}$ into universal task model inputs, the predictive model attempts to solve the universal task over the modified inputs, and the decoder (similarly limited by a decoder-specific task representation) converts predictions for the universal task into predictions over the true labels $\Y$. 

To define this decomposition formally, we introduce the following variables, introduced in Notation~\ref{notation:fm_decomposition_elements}.

\begin{notation}
    \label{notation:fm_decomposition_elements}
    Let $\X, \T, \Y$ be the normal extended domain, with $\rv x, \rv t, \rv y$ the normal jointly distributed random variables. Then, define the following terminology and notational elements:
    \begin{description}[leftmargin=0pt]
        \item[Encoder and Decoder Task Representations] Let $\RR_E$ and $\RR_D$ be encoder and decoder specific task representation spaces, with $\rho_E:\kernelover{\T}{\RR_E}$ and $\rho_D: \kernelover{\T}{\RR_D}$ defining encoder and decoder representations, and $\rv \rE$ and $\rv \rD$ the induced encoder/decoder representation random variables, respectively. 
        \item[Global Representation] Define $\rho = \rho_E \otimes \rho_D$, with $\rv r$ the induced random variable.
        \item[Universal-task Domain] Let $\UX$, $\UY$ be alternative, standard-Borel spaces. Intuitively, $\UX$ and $\UY$ define the input and output spaces of the universal prediction problem to which the original task is reduced: the encoder maps the original input and task representation into $\UX$, and predictions in $\UY$ can subsequently be transformed by the decoder into predictions over the original label space $\Ybar$.
        \item[Encoder] Let $\proc E: \X \times \RR_E \to \UX$. In the context of a known encoder $\proc E$, we can introduce a random variable $\rv{\ux}$ defined as $\proc E(\rv x, \rv \rE)$, jointly distributed with all others.
        \item[Universal Task] Let $u$ define a probability distribution for a ``universal task'' predicting elements from $\UY$ given inputs from $\UX$, such that $u: \kernelover{\UX}{\UY}$. In the context of a defined task $u$ and encoder $\proc E$, we will let the random variable $\rv \uy$ denote the random variable induced from $u(\rv \ux)$, jointly distributed and independent of all others given $\rv \ux$.
        \item[Labeler] Define a per-task deterministic labeler $d: \RR_D \times \UY \to \Y$, a measurable function that maps a universal-task output $z \in \UY$ to a downstream label 
        for any represented task $\rD \in \RR_D$. We write $d_{\rD}(\cdot) = d(\rD, \cdot)$.
        \item[Model] Let $f: \UX \to \ProbDists{\UY}$. In the context of a known $\proc E$ we can define a random variable induced by $f$ as $\fapprox{\rv \uy}$ sampled via $f(\rv \ux)$, jointly distributed and independent with all others given $\rv \ux$.
        \item[Decoder] Let $\proc{D}: \ProbDists{\UY} \times \RR_D \to \ProbDists{\Ybar}$ be the realized decoder. It is intended to approximate the \emph{target decoder} $\Dstar: \ProbDists{\UY} \times \RR_D \to \ProbDists{\Ybar}$ induced by the labeler,
        \[
            \Dstar(q, \rD) := \pushforward{d_{\rD}}{q},
        \]
        i.e., the distribution of $d_{\rD}(z)$ for $z \sim q$. 
    \end{description}
\end{notation}

Unlike the model $f$ and decoder $\proc{D}$, which can be compared respectively to the target universal task $u$ and the stochastic labeling operation induced by $d$, we do not posit a target encoder. Rather, the choice of encoder $E$ is part of the specification of the universal-task reduction itself and therefore jointly determines, together with $u$, the universal prediction problem.

With these introduced, we can formally state and prove Theorem~\ref{thm:all_fms_reduce}, which informally states that all foundation models permit a reduction through this universal task lens.

\begin{theorem}[All Foundation Models are Universal Task Reductions]
    \label{thm:all_fms_reduce}

    Let $\proc{P}$ be a foundation model over an extended domain $\X, \T, \Ybar$. Then there exists a representation, encoder, model, and decoder (in the sense of the previously introduced terminology), such that almost surely
    
    \[\proc{P}(x, \rv t) = \proc{D}(f(\proc{E}(x, \rv \rE)), \rv \rD).\]
\end{theorem}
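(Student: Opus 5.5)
The decomposition can be built trivially, by letting one component carry all of $\proc P$ and making the rest identities. The only subtlety is measurability and the ``almost surely'' qualifier.

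\textbf{Case of direct task access.} Take the complete deterministic representation $\rho_E(t) = \delta_t$ with $\RR_E = \T$, and a degenerate decoder representation $\rho_D(t) = \delta_0$ with $\RR_D = \{0\}$. These are conditionally independent given $\rv t$, so $\rho = \rho_E \otimes \rho_D$ is a valid global representation. Then $\rv \rE = \rv t$ almost surely.

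Next choose the universal-task domain $\UX = \X \times \T$ and $\UY = \Ybar$. All of these are standard Borel, being products of standard Borel spaces. Define the components as follows:
\begin{itemize}
\item encoder $\proc E(x, \rE) = (x, \rE)$;
\item model $f = \proc P$, viewed as a map $\UX \to \ProbDists{\UY}$;
\item labeler $d(\rD, z) = z$, so that $\Dstar(q, \rD) = q$;
\item realized decoder $\proc D(q, \rD) = q$, which equals $\Dstar$.
\end{itemize}
For the universal task we may take $u = \eta$, which is a legitimate kernel $\kernelover{\UX}{\UY}$ because $\eta$ is jointly measurable by assumption. With these choices, for every $x$,
\[
\proc D\bigl(f(\proc E(x, \rv \rE)), \rv \rD\bigr) = \proc P(x, \rv \rE) = \proc P(x, \rv t)
\]
almost surely. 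The only null set involved is the one on which $\rv \rE \neq \rv t$, which is empty for the point-mass representation.

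\textbf{Single-draw case.} If, per Definition~\ref{def:foundation_model}, the foundation model is only available as a procedure $\proc P : \X \times \RR \to \ProbDists{\Ybar}$ over a fixed given representation $\rho$, repeat the construction with $\rho_E = \rho$ and $\UX = \X \times \RR$. The statement should then be read as $\proc P(x, \rv r)$ on the left-hand side. Alternatively, one can keep the decoder nontrivial by setting $\rho_D = \rho$ and letting $\proc D$ pass the representation through. Either choice is fine; I would present the first and remark on the second.

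\textbf{Expected obstacle.} The main obstacle is not algebraic but definitional. We must check that $f = \proc P$ is a measurable map into $\ProbDists{\UY}$ (with the weak/Borel $\sigma$-algebra), so that $\fapprox{\rv \uy}$ is a well-defined jointly distributed random variable. I would handle this by implicitly assuming that $\proc P$ is a Markov kernel, i.e.\ jointly measurable, as the paper does for all procedures. I would also note explicitly that the degenerate $\rho_D$ satisfies the standing independence assumption $\rv r \perp \rv x, \rv y \mid \rv t$. The theorem's content is thus that the reduction always exists, with its nontrivial uses coming from the error decomposition later.
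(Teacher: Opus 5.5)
Your proposal is correct and matches the paper's proof: the paper uses the same trivial construction, with complete $\rho_E(t)=\delta_t$, $\UX=\X\times\T$, $\UY=\Ybar$, identity encoder, $f=\proc P$, and $\proc D(q,\cdot)=q$ ignoring the decoder representation. Your extra choices (a degenerate $\rho_D$, the identity labeler, $u=\eta$, and the single-draw variant) are harmless and not needed for the identity. The paper instead remarks that the construction is $u$-optimal when $u$ is taken to be the kernel of $\proc P$, and your $u=\eta$ is equally admissible because $u$ does not enter the equation.
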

\begin{proof}
    We can realize this trivially by construction. For $t \in \T$, let the encoder representation $\rho_E(t) = \delta_t$ be complete, let $\UX = \X \times \T$, $\UY = \Ybar$, $\proc{E} = I$ be the identity, $f = \proc{P}$ directly, and let $\proc{D}(q, \cdot) = q$ return the first argument only. Then this equality holds almost surely.
\end{proof}

Note that Theorem~\ref{thm:all_fms_reduce} does not suggest that all foundation models \emph{uniquely} decompose among these components -- merely that there exists \emph{a} decomposition of this form.
Additionally, note that Theorem~\ref{thm:all_fms_reduce} leverages a consistent, deterministic representation $\rho$, and thus does not permit us to say that we can decompose a generic foundation model over $\T$ in a manner leveraging an inconsistent or stochastic representation. However, in such settings, if we assume the user is only permitted a single draw from the given representation $\rho$ (which we generally will), then we really are operating over the induced domain of $\X, \RR, \Ybar$ (rather than $\X, \T, \Ybar$) with a consistent, deterministic representation without loss of generality.
Finally, and most critically, note that the construction used in Theorem~\ref{thm:all_fms_reduce} is trivial, and does not assign any special properties to any components in a particular sense. However, it does show that we can, without loss of generality almost surely, use this decomposed view of a foundation model as a general definition. In particular, we can use Definition~\ref{def:reduced_fms} instead of Definition~\ref{def:foundation_model} as our working definition of ATI foundation models, and conclude by Theorem~\ref{thm:all_fms_reduce} that this is almost surely without loss of generality.

\begin{definition}[Foundation Models as Reductions]
    \label{def:reduced_fms}
    Define $\rho$, $\proc{E}$, $f$, $\proc{D}$  in the prior terminology block. Then a foundation model $\proc{P}: \X \times \T \to \ProbDists{\Y}$ can be defined as the tuple $(\rho, \proc{E}, f, \proc{D})$, realizing the function 
    \[\proc{P}(x, \rv t) = \proc{D}(f(\proc{E}(x, \rv \rE)), \rv \rD).\]
\end{definition}

\paragraph{Example}
A nontrivial instance of Theorem~\ref{thm:all_fms_reduce} and Definition \ref{def:reduced_fms} is given by
autoregressive foundation models for longitudinal EHR data
\citep{ETHOS}. Here, the model need not receive the downstream task as
input. Instead, we may take $\rho_E$ to be degenerate and let
$\proc E(x,\cdot)=x$, while choosing the universal task $u$ to be prediction
of a future patient trajectory $\rv z$ given the observed history $\rv x$.
The learned model $f$ approximates this trajectory distribution. A downstream
task $t$, such as predicting whether a particular clinical event occurs within
a specified time horizon, is represented on the decoder side through
$\rho_D$, and $\proc D$ maps the predicted trajectory distribution to a
distribution over the corresponding task label. Thus, many distinct
task-specific predictions are obtained by reducing them to the same universal
task of forecasting future patient trajectories.

\paragraph{Optimality Relative To a Target Universal Task $u$ And Labeler $d$}
Note that thus far, we have not explicitly used the universal task $u$ or the labeler $d$. To incorporate these into our definitions, we will define the notions of $u$ or $d$-optimal foundation models in Definition~\ref{def:consistent_fm}. Informally, a foundation model is $u$-optimal if $f$ perfectly reflects the distribution $u$ (e.g., $f(\ux) = u(\ux)$). 
It is $d$-optimal if the realized decoder coincides with the target decoder, $\proc D = \Dstar$, i.e., $\proc D(q, \rD) = \pushforward{d_{\rD}}{q}$.

\begin{definition}[Foundation Model Optimality]
    \label{def:consistent_fm}
    Let $(\rho, \proc E, f, \proc D)$ define a foundation model $\proc P$, with $u$, $d$ and the induced target decoder $\Dstar$ as defined previously. Then $\proc P$ is \emph{$u$-optimal} if $f = u$ (if $f$ perfectly models the universal task) and \emph{$d$-optimal} if $\proc D(q, \rD) = \Dstar(q, \rD)$ for all $(q, \rD)$ (if $\proc D$ exactly implements the labeler $d$).
\end{definition}

In general, $u$ and $d$-optimality are only meaningful for a pre-specified $u$ or $d$; e.g., the construction used in Theorem~\ref{thm:all_fms_reduce} actually establishes $u$-optimality for $u$ being the kernel for $\proc{P}(\rv x, \rv t)$ and $d$-optimality with $d_{\rD}(z) = z$ (the identity, since $\UY = \Ybar$), thus showing that one can always choose a $u$ and $d$ to establish optimality. However, this will not imply that for any $u$ and $d$, a given foundation model is optimal.

\paragraph{Notational Elements for the Decomposed Foundation Model}

In addition to the variables introduced in Notation~\ref{notation:fm_decomposition_elements}, now that we have introduced $u$ and $d$ optimality, it makes sense to introduce explicit random variables for the various combinations of the components of $\proc P$ under Definition~\ref{def:reduced_fms}, as they will ultimately help us describe $\proc P$ in more precise ways. These elements are defined in Notation~\ref{notation:fm_augmented_rvs} and are represented as an augmented graphical model in Figure~\ref{fig:graphical_model_fm_error}. 
Tables \ref{tab:universal_task_spaces} and \ref{tab:universal_task_reduction} summarize all the introduced objects and how they relate to each other.

\begin{notation}
    \label{notation:fm_augmented_rvs}
    Recall the notation introduced in Notation~\ref{notation:fm_decomposition_elements}. We further introduce the following notational conventions: (1) The use of a $\fapprox{\cdot}$ will indicate a random variable that has been approximated via $f$ rather than the true universal task $u$ -- this is consistent with $\fapprox{\rv \uy}$ already introduced in Notation~\ref{notation:fm_decomposition_elements}. (2) The use of a $\Dapprox{\cdot}$ will indicate a distribution being approximated through the action of the realized decoder $\proc D$ rather than the target decoder $\Dstar$.

    Consistent with these, we have:
    \begin{description}[leftmargin=0pt]
        \item[Input Variables] $\rv x, \rv t, \rv y, \rv \rE, \rv \rD$ as defined in Notation~\ref{notation:fm_decomposition_elements}.
        \item[Encoder Action] Let $\rv \ux = \proc E(\rv x, \rv \rE)$ be the random variable induced by $\proc E$.
        \item[Universal Task Outputs] Let $\rv \uy$ be the true universal task label drawn from $u(\rv \ux)$, and $\fapprox{\rv \uy}$ be the approximated universal task label drawn from $f(\rv \ux)$. A $u$-optimal model has $\gv{\rv \uy}{\rv \ux}$ and $\gv{\fapprox{\rv \uy}}{\rv \ux}$ identically distributed.
        \item[Decoded Outputs] We will let the ``\duy'' character represent the $\Ybar$ valued random variable obtained by applying the target decoder $\Dstar$ or the realized decoder $\proc D$ to a distribution over $\UY$. Concretely, 
        \begin{itemize}
            \item Let $\rv \duy$ represent the random variable obtained by applying the target decoder to the true universal-task distribution:
            \[
                \rv \duy \sim \Dstar\!\left(u(\rv \ux), \rv \rD\right).
            \]
            \item Let $\Dapprox{\rv \duy}$ represent the random variable obtained by applying the realized decoder to the true universal-task distribution:
            \[
                \Dapprox{\rv \duy} \sim \proc D\!\left(u(\rv \ux), \rv \rD\right).
            \]
            \item Let $\fapprox{\rv \duy}$ represent the random variable obtained by applying the target decoder to the model-approximated universal-task distribution:
            \[
                \fapprox{\rv \duy} \sim \Dstar\!\left(f(\rv \ux), \rv \rD\right).
            \]

            \item Instead of using the awkward notation $\fapprox{\Dapprox{\rv \duy}}$ for our final term, we use $\fapprox{\rv y}$ to represent the true output of $\proc P$,  obtained by applying the realized decoder to the realized model output:
            \[
                \fapprox{\rv y} \sim \proc D\!\left(f(\rv \ux), \rv \rD\right).
            \]
        \end{itemize}
        Note that a $d$-optimal model has $\gv{\rv \duy}{\rv x, \rv r}$ and $\gv{\Dapprox{\rv \duy}}{\rv x, \rv r}$ identically distributed, and likewise $\gv{\fapprox{\rv \duy}}{\rv x, \rv r}$ and $\gv{\fapprox{\rv y}}{\rv x, \rv r}$. If $f$ is $u$-optimal and $D$ is $d$-optimal, then
            \[
            \gv{\rv \duy}{\rv x, \rv r}
            =
            \gv{\Dapprox{\rv \duy}}{\rv x, \rv r}
            =
            \gv{\fapprox{\rv \duy}}{\rv x, \rv r}
            =
            \gv{\fapprox{\rv y}}{\rv x, \rv r}
            \quad \text{a.s.}
            \]
    \end{description}
    The conditional independence relationships between these random variables, and their generating kernels/relationships, are shown via the augmented graphical model in Figure~\ref{fig:graphical_model_fm_error}.
\end{notation}


\begin{figure}
    \centering
    \begin{tikzpicture}[
        rv/.append style={
            draw=blue!70!black,
            fill=blue!8,
            thick
        },
        kernel/.append style={
            draw=orange!80!black,
            fill=orange!12,
            thick
        }
    ]


        \node[style=rv] (x)
            {$\rv x$};

        \node[style=rv] (t)
            [below right=of x]
            {$\rv t$};

        \node[style=kernel] (eta)
            [above right=of t]
            {$\eta$};

        \node[style=rv] (y)
            [right=of eta]
            {$\rv y$};

        \draw[->] (x) -- (t);
        \draw[-]  (t) -- (eta);
        \draw[-]  (x) -- (eta);
        \draw[->] (eta) -- (y);


        \node[style=kernel] (rhoE)
            [below left=of t]
            {$\rho_E$};

        \node[style=rv] (rE)
            [below=of rhoE]
            {$\rv \rE$};

        \node[style=kernel] (rhoD)
            [below right=of t]
            {$\rho_D$};

        \node[style=rv] (rD)
            [below=of rhoD]
            {$\rv \rD$};

        \draw[-]  (t) -- (rhoE);
        \draw[->] (rhoE) -- (rE);

        \draw[-]  (t) -- (rhoD);
        \draw[->] (rhoD) -- (rD);


        \node[style=kernel] (E)
            [below=14mm of rE]
            {$\proc E$};

        \node[style=rv] (ux)
            [below=of E]
            {$\rv \ux$};

        \draw[-]
            (rE.south) -- (E.north);

        \coordinate (xEleft)
            at ($(E.west)+(-11mm,0)$);

        \coordinate (xEtop)
            at (xEleft |- x.west);

        \draw[-]
            (x.west)
            -- (xEtop)
            -- (xEleft)
            -- (E.west);

        \draw[->]
            (E.south) -- (ux.north);


        \node[style=kernel] (u)
            [above right=of ux]
            {$u$};

        \node[style=rv] (uy)
            [right=of u]
            {$\rv \uy$};

        \node[style=kernel] (f)
            [below right=12mm and 10mm of ux]
            {$f$};

        \node[style=rv] (fuy)
            [right=of f]
            {$\fapprox{\rv \uy}$};

        \draw[-]  (ux) -- (u);
        \draw[-]  (ux) -- (f);

        \draw[->] (u) -- (uy);
        \draw[->] (f) -- (fuy);


        \node[style=kernel] (du)
            [right=32mm of uy]
            {$\Dstar$};

        \node[style=rv] (duy)
            [below=6mm of du]
            {$\rv \duy$};

        \node[style=kernel] (Du)
            [below=8mm of duy]
            {$\proc D$};

        \node[style=rv] (Duy)
            [below=6mm of Du]
            {$\Dapprox{\rv \duy}$};

        \node[style=kernel] (df)
            [below=10mm of Duy]
            {$\Dstar$};

        \node[style=rv] (dfuy)
            [below=6mm of df]
            {$\fapprox{\rv \duy}$};

        \node[style=kernel] (Df)
            [below=8mm of dfuy]
            {$\proc D$};

        \node[style=rv] (Dfuy)
            [below=6mm of Df]
            {$\fapprox{\rv y}$};

        \draw[->] (du.south) -- (duy.north);
        \draw[->] (Du.south) -- (Duy.north);
        \draw[->] (df.south) -- (dfuy.north);
        \draw[->] (Df.south) -- (Dfuy.north);

        %

        \draw[dashed]
            (uy.east) -- (du.west);

        \draw[dashed]
            (uy.south east) -- (Du.west);

        \draw[dashed]
            (fuy.north east) -- (df.west);

        \draw[dashed]
            (fuy.east) -- (Df.west);

        %

        \coordinate (laneBase)
            at ($(du.east)+(6mm,0)$);

        \foreach \target/\lane/\ang in {du/0/-25, Du/1/-8, df/2/8, Df/3/25} {
            \coordinate (lane)
                at ($(laneBase)+(\lane*3.5mm,0)$);
            \draw[-]
                (rD.\ang)
                -| (lane |- \target.east)
                -- (\target.east);
        }

    \end{tikzpicture}

    \caption{
        An augmented schematic graphical representation of the random
        variables and distribution-level operations induced by different
        combinations of our foundation model components.
        Blue circular nodes correspond to random variables and orange
        nodes correspond to stochastic or deterministic kernels or
        procedures.
        Solid directed edges through kernels indicate sample-level
        generation.
        Dashed edges from universal-task output random variables to
        $\Dstar$ or $\proc D$ indicate dependence on the corresponding
        conditional distribution rather than pointwise application to the
        realized random variable.
        Each decoder (target $\Dstar$ or realized $\proc D$) separately
        receives the decoder-side task representation $\rv \rD$.
        This figure should be interpreted as a probabilistic graphical
        representation rather than a causal DAG.
    }

    \label{fig:graphical_model_fm_error}
\end{figure}


\definecolor{targetrow}{RGB}{242,247,255}
\definecolor{realizedrow}{RGB}{255,247,238}



\definecolor{targetrow}{RGB}{242,247,255}
\definecolor{realizedrow}{RGB}{255,247,238}

\begin{table*}[!t]
    \centering
    \footnotesize
    \setlength{\tabcolsep}{5pt}
    \renewcommand{\arraystretch}{1.15}

    \begin{tabularx}{\textwidth}{
        @{}
        >{\raggedright\arraybackslash}p{0.19\textwidth}
        >{\raggedright\arraybackslash}p{0.30\textwidth}
        >{\raggedright\arraybackslash}X
        @{}
    }

        \toprule

        \textbf{Space}
        &
        \textbf{Elements / random variable}
        &
        \textbf{Interpretation}
        \\

        \midrule

        Input space $\mathcal X$
        &
        $x\sim\rv x,\quad x\in\mathcal X$
        &
        Original data/input space.
        \\

        Task space $\mathcal T$
        &
        $t\sim\rv t,\quad t\in\mathcal T$
        &
        Space of downstream tasks of interest.
        \\

        Label space $\Ybar$ 
        &
        $y\sim\rv y,\quad y\in\Ybar$
        &
        Downstream label space.
        \\

        Encoder representation space $\mathcal R_E$
        &
        $r_E\sim\rv\rE,\quad r_E\in\mathcal R_E$
        &
        Space in which encoder-side representations of the task live.
        \\

        Decoder representation space $\mathcal R_D$
        &
        $r_D\sim\rv\rD,\quad r_D\in\mathcal R_D$
        &
        Space in which decoder-side representations of the task live.
        \\

        Global representation space
        $\mathcal R_E\times\mathcal R_D$
        &
        $r=(r_E,r_D)\sim\rv r$
        &
        Joint task-representation space combining the information used
        by the encoder and decoder.
        \\

        Universal-task input space $\UX$
        &
        $v\sim\rv\ux,\quad v\in\UX$
        &
        Input space of the universal task.
        \\

        Universal-task output space $\UY$
        &
        $z\sim\rv\uy,\quad z\in\UY$
        &
        Output space of the universal task.
        \\

        \bottomrule

    \end{tabularx}

    \caption{
        Spaces used in the universal-task reduction and their associated
        realizations and random variables.
    }

    \label{tab:universal_task_spaces}
\end{table*}

\begin{table*}[!t]
    \centering
    \footnotesize
    \setlength{\tabcolsep}{3.2pt}
    \renewcommand{\arraystretch}{1.18}

    \begin{tabularx}{\textwidth}{
        @{}
        >{\raggedright\arraybackslash}p{0.075\textwidth}
        >{\raggedright\arraybackslash}p{0.145\textwidth}
        >{\raggedright\arraybackslash}p{0.19\textwidth}
        >{\raggedright\arraybackslash}p{0.245\textwidth}
        >{\raggedright\arraybackslash}X
        @{}
    }

        \toprule

        \textbf{Role}
        &
        \textbf{Object}
        &
        \textbf{Definition / type}
        &
        \textbf{Induced object(s)}
        &
        \textbf{Interpretation}
        \\

        \midrule


        \multicolumn{5}{@{}l}{\textbf{Base problem and reduction}}\\
        \addlinespace[1pt]

        Base
        &
        Task kernel $\eta$
        &
        $\eta:
        \mathcal X\times\mathcal T
        \rightsquigarrow\Ybar$
        &
        $\rv y
        \mid
        \rv x=x,\rv t=t
        \sim
        \eta(\cdot\mid x,t)$
        &
        True task-specific conditional distribution of labels.
        \\

        Reduction
        &
        Encoder task representation $\rho_E$
        &
        $\rho_E:
        \mathcal T
        \rightsquigarrow
        \mathcal R_E$
        &
        $\rv\rE
        \mid \rv t=t
        \sim
        \rho_E(t)$
        &
        Represents the task for the purpose of constructing the
        universal-task input.
        \\

        Reduction
        &
        Decoder task representation $\rho_D$
        &
        $\rho_D:
        \mathcal T
        \rightsquigarrow
        \mathcal R_D$
        &
        $\rv\rD
        \mid \rv t=t
        \sim
        \rho_D(t)$
        &
        Represents the task for the purpose of interpreting
        universal-task outputs in the original label space.
        \\

        Reduction
        &
        Global representation $\rho$
        &
        $\rho
        =
        \rho_E\otimes\rho_D:
        \mathcal T
        \rightsquigarrow
        \mathcal R_E\times\mathcal R_D$
        &
        $\rv r=(\rv\rE,\rv\rD)$
        &
        Joint representation containing the encoder- and decoder-side
        task representations.
        \\

        Reduction
        &
        Encoder $\proc E$
        &
        $\proc E:
        \mathcal X\times\mathcal R_E
        \to
        \UX$
        &
        $\rv\ux
        =
        \proc E(\rv x,\rv\rE)$
        &
        Maps the original input and encoder-side task representation
        into the input space of the universal task.
        \\

        \addlinespace[2pt]
        \midrule


        \multicolumn{5}{@{}l}{\textbf{Universal task}}\\
        \addlinespace[1pt]

        \rowcolor{targetrow}
        Target
        &
        Universal task $u$
        &
        $u:
        \UX
        \rightsquigarrow
        \UY$
        &
        $\rv\uy
        \mid\rv\ux=v
        \sim
        u(v)$
        &
        Target conditional distribution defining the universal task.
        \\

        \rowcolor{realizedrow}
        Realized
        &
        Model $f$
        &
        $f:
        \UX
        \to
        \mathcal P(\UY)$
        &
        $\fapprox{\rv\uy}
        \mid\rv\ux=v
        \sim
        f(v)$
        &
        Realized predictive model intended to approximate the
        universal task $u$.
        \\

        \addlinespace[2pt]
        \midrule


        \multicolumn{5}{@{}l}{\textbf{Task-specific labeling and decoding}}\\
        \addlinespace[1pt]


        \rowcolor{targetrow}
        Target
        &
        Labeler $d$
        &
        $d:
        \mathcal R_D\times\UY
        \to
        \Ybar$
        &
        \begin{tabular}[t]{@{}l@{}}
            $d_{\rD}(z)=d(\rD,z)$\\
        \end{tabular}
        &
        Task-specific deterministic map from a universal-task outcome
        to a downstream label. 
        \\

        \rowcolor{targetrow}
        Target
        &
        Target decoder $\Dstar$
        &
        \begin{tabular}[t]{@{}l@{}}
            $\Dstar:
            \mathcal P(\UY)\times\mathcal R_D
            \to
            \mathcal P(\Ybar)$,\\
            $\Dstar(q,\rD)=\pushforward{d_{\rD}}{q}$
        \end{tabular}
        &
        \begin{tabular}[t]{@{}l@{}}
            $\rv\duy
            \sim
            \Dstar(u(\rv\ux),\rv\rD)$,\\
            $\fapprox{\rv\duy}
            \sim
            \Dstar(f(\rv\ux),\rv\rD)$
        \end{tabular}
        &
        Decoding operation implied by the target labeler.
        \\

        \rowcolor{realizedrow}
        Realized
        &
        Decoder $\proc D$
        &
        $\proc D:
        \mathcal P(\UY)\times\mathcal R_D
        \to
        \mathcal P(\Ybar)$
        &
        \begin{tabular}[t]{@{}l@{}}
            $\Dapprox{\rv\duy}
            \sim
            \proc D(u(\rv\ux),\rv\rD)$,\\
            $\fapprox{\rv y}
            \sim
            \proc D(f(\rv\ux),\rv\rD)$
        \end{tabular}
        &
        Realized decoder intended to approximate the target decoder
        $\Dstar$.
        \\

        \addlinespace[2pt]
        \midrule


        \multicolumn{5}{@{}l}{\textbf{Complete foundation model}}\\
        \addlinespace[1pt]

        \rowcolor{realizedrow}
        Realized
        &
        Foundation model $\proc P$
        &
        $\proc P:
        \mathcal X\times\mathcal T
        \to
        \mathcal P(\Ybar)$
        &
        \begin{tabular}[t]{@{}l@{}}
            $\fapprox{\rv y}
            \sim
            \proc P(\rv x,\rv t)$,\\[2pt]
            $\proc P(x,t)
            =
            \proc D\!\left(
                f\!\left(
                    \proc E(x,r_E)
                \right),
                r_D
            \right)$
        \end{tabular}
        &
        Complete any-task inference procedure obtained by composing
        the task representations, encoder, realized universal-task
        model, and decoder.
        \\

        \bottomrule

    \end{tabularx}

    \caption{
        Components of the universal-task reduction. Reduction components
        specify how the original task and input are mapped into the
        universal-task domain. Target components, shown in blue, specify
        the intended universal task, labeler, and the target decoder they induce; realized components, shown in orange, denote
        their implemented counterparts.
    }

    \label{tab:universal_task_reduction}
\end{table*}

\section{Decomposing Foundation Model Error}
\label{sec:FM_error_sources}
In this section, we will use Definition~\ref{def:reduced_fms} to decompose foundation model errors into terms corresponding to different aspects of this decomposition. There are many such decompositions possible, and as we will define them here, many are near-trivial in complexity, as we will define individual error terms such that by construction they sum to the total foundation model error. Nevertheless, the decompositions we introduce will allow us to understand what parts of the error we can optimize and control under different algorithms. For example, suppose a foundation model is trained in a ``typical'' manner, where the developer's intent is to train the model $f$ to approximate a pre-specified universal task $u$ and the decoder $\proc D$ to approximate the target decoder $\Dstar$, while the other components of the process are held fixed. A typical training procedure in this setting would then optimize $f$ to better approximate $u$ and/or $\proc D$ to better approximate $\Dstar$. By separating such error components out from other aspects of the error, we can identify what remaining components exist in the total error and when they can be bounded or identified to be zero.

To decompose error in this way, we will first formally define what we mean by ``foundation model error'' in Subsection~\ref{subsec:fm_error_def}, then introduce a large set of terms quantifying different aspects of the total error in Subsection~\ref{subsec:fm_error_components}, before finally formally introducing our error decompositions and arguing for their utility in Subsection~\ref{subsec:fm_error_decomp}.

\subsection{Defining Error for a Single-Draw Model}
\label{subsec:fm_error_def}
We define the error of a foundation model under strictly proper scoring loss $\ell$ formally in Definition~\ref{def:fm_error}.

\begin{definition}[Foundation Model $\ell$-Error]
\label{def:fm_error}
    Let $\X, \T, \Ybar$ be an extended domain, $\rho: \kernelover{\T}{\RR}$ be a task representation, $\rv x, \rv t, \rv r, \rv y$ be the normal joint random variables, and let $\ell$ be a strictly proper scoring loss.
    Let $\proc{P}$ be a single-$\rho$-draw foundation model over this domain, defined therefore via Definition~\ref{def:foundation_model} as a procedure over $\X \times \RR$. Let $\hat{\rv y}$ be a random variable induced by $\proc{P}(\rv x, \rv r)$, which we can therefore understand to be jointly distributed with all other random variables.
    We define the foundation model $\proc{P}$'s $\ell$ error over $\X, \T, \Ybar$ as the normal regret divergence for $\ell$ between the predicted distribution given by $\proc P$ and the true distribution given by $\proc P^* = \eta$ in expectation over samples and tasks:
    \begin{align*}
        \Err{\ell}{\proc{P}}
            &:= d_\ell(\eta(\rv x, \rv t), \proc{P}(\rv x, \rv r | \rv t)) \\
            &= \EV{\rv x, \rv t}{
                \EV{\rv r | \rv t = t}{
                    d_\ell(\eta(x, t), \proc P(x, r))
                }
            } \\ 
    \end{align*}
\end{definition}

Definition~\ref{def:fm_error} introduces the error of the $\X \times \RR$ procedure against the ideal model achievable over $\X \times \T$ (to respect the single-draw constraint). To capture the impact the single-draw criteria has, Lemma~\ref{lemma:fm_error_reorg} decomposes $\Err{\ell}{\proc P}$ into a term describing the error in comparison to the achievable error of $\proc P$ vs. the ideal predictor over $\X \times \RR$ (not $\X \times \T$) and a term capturing remaining error due to task-representation error.

\begin{lemma}
    \label{lemma:fm_error_reorg}
    Given Definition~\ref{def:fm_error} and $\phi_\rho: \kernelover{\X \times \RR}{\Ybar}$ introduced in Definition~\ref{def:induced_kernel_phi}, we can re-organize $\Err{\ell}{\proc P}$ as defined in Definition~\ref{def:fm_error} as follows:
    \begin{align*}
        \Err{\ell}{\proc{P}}
            &= \underbrace{\divergence{\ell}{\phi_\rho}{ \proc P}}_{\text{achievable error}} + \representationErr \\
            &= \underbrace{\condDivergence{\ell}{\rv y}{\fapprox{\rv y}}{\rv x, \rv r}}_{\text{achievable error}} + \representationErr
    \end{align*}
\end{lemma}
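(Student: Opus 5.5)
The plan is to verify a Pythagorean-type identity for the regret divergence of a strictly proper scoring loss, namely that it splits additively when the prediction depends only on a coarser $\sigma$-algebra. Write $d_\ell(p,q) = \EV{\rv y'\sim p}{\ell(q,\rv y') - \ell(p,\rv y')}$, the regret of predicting $q$ when the truth is $p$, with generalized entropy $h_\ell(p) = \EV{\rv y'\sim p}{\ell(p,\rv y')}$, as in Appendix~\ref{sec:notation}. The key observation is that $\proc P(x,r)$ is $\sigma(\rv x,\rv r)$-measurable, while the true label distribution is $\eta(\rv x,\rv t)$.

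First, expand the integrand of Definition~\ref{def:fm_error} as
\[
d_\ell(\eta(x,t),\proc P(x,r)) = \EV{\rv y\sim\eta(x,t)}{\ell(\proc P(x,r),\rv y)} - h_\ell(\eta(x,t)).
\]
Take the expectation over $\rv x,\rv t,\rv r$. The second term integrates to $h_\ell(\gv{\rv y}{\rv x,\rv t})$.

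For the first term, use the definitional assumption $\rv r\perp \rv x,\rv y\mid\rv t$. Under it, $\rv y\mid \rv x,\rv t,\rv r$ is distributed as $\eta(\rv x,\rv t)$, so the first term equals $\mathbb{E}[\ell(\proc P(\rv x,\rv r),\rv y)]$ under the full joint law. Then apply the tower property conditioning on $(\rv x,\rv r)$. Since $\proc P(\rv x,\rv r)$ is measurable in $(\rv x,\rv r)$ and $\rv y\mid\rv x,\rv r\sim\phi_\rho(\rv x,\rv r)$ by Definition~\ref{def:induced_kernel_phi}, this gives $\mathbb{E}_{\rv x,\rv r}\,\EV{\rv y\sim\phi_\rho(\rv x,\rv r)}{\ell(\proc P(\rv x,\rv r),\rv y)}$.

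Now add and subtract $h_\ell(\gv{\rv y}{\rv x,\rv r}) = \mathbb{E}_{\rv x,\rv r}[h_\ell(\phi_\rho(\rv x,\rv r))]$. The first term then becomes $\mathbb{E}_{\rv x,\rv r}[d_\ell(\phi_\rho(\rv x,\rv r),\proc P(\rv x,\rv r))] + h_\ell(\gv{\rv y}{\rv x,\rv r})$. Combining with the second term yields
\[
\Err{\ell}{\proc P} = \divergence{\ell}{\phi_\rho}{\proc P} + h_\ell(\gv{\rv y}{\rv x,\rv r}) - h_\ell(\gv{\rv y}{\rv x,\rv t}),
\]
and the last difference is exactly $\representationErr$ by Definition~\ref{def:representation_error}.

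The second displayed line of the lemma is a notational restatement. The conditional law of $\rv y$ given $(\rv x,\rv r)$ is $\phi_\rho$, and $\fapprox{\rv y}\mid\rv x,\rv r \sim \proc P(\rv x,\rv r)$ by construction. Hence $\condDivergence{\ell}{\rv y}{\fapprox{\rv y}}{\rv x,\rv r}$ unpacks to the same expectation.

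The main obstacle is the conditioning step. It requires that the disintegration of $\rv y$ given $(\rv x,\rv t,\rv r)$ coincides with $\eta$, and that $\phi_\rho$ is the corresponding mixture $\int\eta(x,t)\,\kappa$-type average over $\rv t\mid\rv x,\rv r$. Both follow from the conditional independence assumption together with standard Borel disintegration. One must also ensure integrability of $\ell$ terms so that the add/subtract step is legitimate; I would assume finiteness of $h_\ell(\gv{\rv y}{\rv x,\rv t})$, as is implicit in Definition~\ref{def:representation_error}.
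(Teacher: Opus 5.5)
Your proof is correct and follows essentially the same route as the paper. Both split the regret into the cross-entropy term and $h_\ell(\eta(x,t))$, use $\rv r \perp \rv x, \rv y \mid \rv t$ to rewrite the cross-entropy as an expectation under $\phi_\rho(\rv x,\rv r)$, and add and subtract $h_\ell(\gv{\rv y}{\rv x, \rv r})$ to isolate $\representationErr$; the mixture representation of $\phi_\rho$ you flag as an obstacle is not actually needed, since $\phi_\rho$ is by definition the law of $\rv y$ given $(\rv x, \rv r)$.
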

\begin{proof}
    \begin{align*}
        \Err{\ell}{\proc{P}}
            &= \EV{\rv x, \rv t}{
                \EV{\rv r | \rv t = t}{
                    d_\ell(\eta(x, t), \proc P(x, r))
                }
            } \\ 
            &= \EV{\rv x, \rv t}{
                \EV{\rv r | \rv t = t}{
                    \EV{\rv y | \rv x = x, \rv t = t, \rv r  = r}{
                        \ell(\proc P(x, r), y)
                        -
                        \ell(\eta(x, t), y)
                    }
                }
            } \\
            &= \EV{\rv x, \rv t}{
                \EV{\rv r | \rv t = t}{
                    \EV{\rv y | \rv x = x, \rv t = t, \rv r = r}{
                        \ell(\proc P(x, r), y)
                    }
                }
                -
                \EV{\rv y | \rv x = x, \rv t = t}{
                    \ell(\eta(x, t), y)
                }
            } \\
            &= \EV{\rv x, \rv t}{
                \EV{\rv r | \rv t = t, \rv x = x}{
                    \EV{\rv y | \rv x = x, \rv r = r}{
                        \ell(\proc P(x, r), y)
                    }
                }
            }
            -
            \EV{\rv x, \rv t}{
                \EV{\rv y | \rv x = x, \rv t = t}{
                    \ell(\eta(x, t), y)
                }
            } \\
            &= \EV{\rv t}{
                \EV{\rv r, \rv x | \rv t = t}{
                    d_\ell(\phi_\rho(x, r), \proc P(x, r)) + h_\ell(\phi_\rho(x, r))
                }
            }
            -
            h_\ell(\rv y | \rv x, \rv t) \\
            &= \EV{\rv r, \rv x}{
                \EV{\rv t = t | \rv r=r, \rv x=x}{
                    d_\ell(\phi_\rho(x, r), \proc P(x, r)) + h_\ell(\phi_\rho(x, r))
                }
            }
            -
            h_\ell(\rv y | \rv x, \rv t) \\
            &= \divergence{\ell}{\phi_\rho(\rv x, \rv r)}{\proc P(\rv x, \rv r)} + h_\ell(\rv y | \rv x, \rv r) - h_\ell(\rv y | \rv x, \rv t) \\
            &= \divergence{\ell}{\phi_\rho}{\proc P} + \representationErr,
    \end{align*}
    as desired.
\end{proof}


\subsection{Components of Foundation Model Error}
\label{subsec:fm_error_components}
To provide a useful decomposition of $\Err{\ell}{\proc P}$, in this section we will introduce a variety of terms of different types of error which we will eventually combine in Theorem~\ref{thm:error_decomposition}.

These terms will be organized to quantify the errors of different components of $\proc P$ in various arrangements, motivated by the assumption that the developer is ``intending'' to approximate a pre-specified universal task $u$ and target decoder $D^\star$. For example, under the assumption that a model builder may be training the model $f$ to approximate a task $u$, it would be of interest to compute the error of $f$'s approximation of $u$ (to determine how well their training process worked) or how much error they would incur were they able to approximate $u$ perfectly, then use the rest of their foundation model as specified (to evaluate the choice of universal task $u$).

We will introduce these terms in three sections. First, a section introducing ``error'' terms capturing unambiguous, strictly non-negative measures of error that would be obtained were elements of $\proc P$ isolated from the system and considered in a standalone manner. Second, a section introducing signed ``excess'' terms that capture how real-world interactions amongst these components in a full foundation model can either exacerbate or ameliorate total expected error. Finally, third, we will note that as $\ell$ is a strictly proper scoring loss, there are natural, information-geometric, signed Bregman interaction terms canonical via the Bregman three-point identity between some combinations of our terms. 
All of these sections will rely heavily on the notation introduced in Appendix~\ref{sec:notation} and the specific random variables introduced to correspond to foundation model components defined in Notation~\ref{notation:fm_decomposition_elements} and \ref{notation:fm_augmented_rvs} and Figure~\ref{fig:graphical_model_fm_error}. In addition, all of these terms will judge the foundation model (for now) against the achievable total error, and thus condition on the observed $(\rv x, \rv r)$ pair rather than the base $(\rv x, \rv t)$ pair, consistent with the single-draw criteria.

\paragraph{Error Terms}
In Definition~\ref{def:error_terms}, we introduce various direct error measurements corresponding to parts of $\proc P$. While we will combine each of these terms later, informally, they are all directly useful quantities when designing or evaluating a concrete foundation model. Specifically:
\begin{itemize}
    \item A target universal task $u$ that has large deficiency $\uDef$ is likely a poor target for foundation model development, as solving the task perfectly will still not enable production of a consistent, high-performance foundation model.
    \item While often unmeasurable, the observed model deficiency $\fDef$ directly captures how much error is inescapable with a fixed model $f$, and will be useful in our decomposition later.
    \item If the model error $\fModelErr$ is large, then the model poorly approximates $u$, limiting internal validity if the choice of universal task $u$ is important.
    \item If labeler error w.r.t. either $u$ or $f$ is large, then the choice of labeler $d$ can be seen to be problematic at a design level.
    \item Finally, if the observed decoder error w.r.t. either $u$ or $f$ is large, it means that the realized decoder $\proc D$ is unable to well approximate the target decoder $\Dstar$, limiting internal validity if the choice of $d$ is important.
\end{itemize}

All terms will also be w.r.t. a strictly proper $\Ybar$-scoring loss
$\ell$. In addition, we let
$\ell_u:\ProbDists{\UY}\times\UY\to\overline{\R}$
denote a strictly proper scoring loss over the universal-task output
space $\UY$, which we use to measure discrepancies between the target
universal task $u$ and the model $f$.

\begin{definition}
    \label{def:error_terms}
    Given the setup introduced previously, define the following error terms, which are all non-negative:
    \begin{description}
        \item[$u$-deficiency $\uDef$] The \emph{universal-task deficiency} is the minimal possible error a $u$-optimal foundation model $\proc P$ could ever achieve when its decoder is restricted to a labeling step:\footnote{Meaning that $\proc P$ must be $d$-optimal for some labeling kernel $\lambda \in \MarkovKernels{\RR_D \times \UY}{\Ybar}$. We allow $\lambda$ to be stochastic so that $\uDef$ is the best error attainable from $\rv \uy$ at all; the cost of restricting to a deterministic labeler $d$ is then part of $\dErrU$.}
        \[\uDef := \condDeficiency{\ell}{\rv y}{\rv \uy, \rv \rD}{\rv x, \rv r}.\]
        \item[$f$-deficiency $\fDef$] The \emph{model-deficiency} is analogous to $\uDef$, but when we use the model $f$ rather than the target universal task $u$:
        \[\fDef := \condDeficiency{\ell}{\rv y}{\fapprox{\rv \uy}, \rv \rD}{\rv x, \rv r}.\]
        \item[Model Error $\fModelErr$] The \emph{model-error} captures how effectively $f$ models $u$, in the typical way:
        \[\fModelErr := \condDivergence{\ell_u}{\rv{\uy}}{\fapprox{\rv{\uy}}}{\rv x, \rv r}.\]
        \item[Labeler Error $\dErrU$] The \emph{labeler-error w.r.t. $u$} computes how much excess error is induced through the use of a specific labeler $d$ over the minimal error possible to obtain when predicting from the true universal task $u$:
        \[\dErrU := \condDivergence{\ell}{\rv y}{\rv \duy}{\rv x, \rv r} - \uDef.\]
        \item[Labeler Error $\dErrF$] Similarly, the \emph{labeler error w.r.t. $f$} computes the excess error due to $d$ when predicting from real model $f$:
        \[\dErrF := \condDivergence{\ell}{\rv y}{\fapprox{\rv \duy}}{\rv x, \rv r} - \fDef.\]
        \item[Decoder-labeler Error w.r.t $u$ $\DErrU$] The \emph{Decoder-error w.r.t. $u$} computes how well the realized decoder $\proc D$ approximates the target decoder $\Dstar$ on the outputs of the universal task $u$:
        \[\DErrU := \condDivergence{\ell}{\rv \duy}{\Dapprox{\rv \duy}}{\rv x, \rv r}\]
        \item[Decoder-labeler Error w.r.t $f$ $\DErrF$] The \emph{Decoder-error w.r.t $f$} is analogous to $\DErrU$ but over the outputs of realized model $f$ rather than $u$\footnote{Recall that $\fapprox{\rv y}$ denotes the realized decoder applied to the realized model, i.e., the output of $\proc P$.}:
        \[\DErrF := \condDivergence{\ell}{\fapprox{\rv \duy}}{\fapprox{\rv y}}{\rv x, \rv r}.\]
    \end{description}
\end{definition}

With these error terms, we can additionally introduce Definition~\ref{def:well-founded}, which states that a foundation model is \emph{well-founded} w.r.t. either $u$ or $d$ if it has no universal-task deficiency or labeler error, respectively. Informally, this definition is advisable as if one can justify that a foundation model setup is well-founded, it indicates they have chosen a universal task and labeler that theoretically permit any-task inference at optimal achievable performance, in the event that one can learn a $u$-optimal model $f$ and leverage a $d$-optimal decoder $\proc D$.
\begin{definition}
    \label{def:well-founded}
    A foundation model $\proc P$ is \emph{well-founded w.r.t. $u$} if $\uDef = 0$. It is \emph{well-founded w.r.t. $d$} if $\dErrU = 0$. It is simply \emph{well-founded} if it is well-founded w.r.t. both $u$ and $d$.
\end{definition}


In addition, Lemma~\ref{lemma:u-d-optimal-err} relates $u$- and $d$-optimality to the corresponding error terms, showing that $u$-optimality is characterized by zero model error, while $d$-optimality guarantees zero decoder error.

\begin{lemma}
    \label{lemma:u-d-optimal-err}
    Recall Definitions~\ref{def:consistent_fm} and \ref{def:error_terms}. Two properties hold:
    \begin{enumerate}
        \item $\proc P$ is $u$-optimal iff $\fModelErr = 0$.
        \item If $\proc P$ is $d$-optimal, then $\DErrF = \DErrU = 0$.
    \end{enumerate}
\end{lemma}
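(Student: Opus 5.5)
Both parts reduce to one fact about the conditional divergence from Appendix~\ref{sec:notation}. For a strictly proper scoring loss, $d_\ell(p, q) \ge 0$, with equality iff $p = q$. Hence a conditional divergence $\condDivergence{\ell}{\rv a}{\rv b}{\rv x, \rv r}$, being an expectation over $(\rv x, \rv r)$ of pointwise divergences between the conditional laws of $\rv a$ and $\rv b$, is zero iff those conditional laws agree almost surely. I will apply this separately to $\fModelErr$ (with $\ell_u$) and to $\DErrU$ and $\DErrF$ (with $\ell$).

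For part 1, I unfold $\fModelErr = \condDivergence{\ell_u}{\rv\uy}{\fapprox{\rv\uy}}{\rv x, \rv r}$. By Notation~\ref{notation:fm_decomposition_elements}, $\rv\uy$ and $\fapprox{\rv\uy}$ are each conditionally independent of everything else given $\rv\ux = \proc E(\rv x, \rv\rE)$. Since $\rv\ux$ is a measurable function of $(\rv x, \rv r)$, their conditional laws given $(\rv x, \rv r)$ are $u(\rv\ux)$ and $f(\rv\ux)$. So $\fModelErr = \mathbb{E}_{\rv\ux}\left[d_{\ell_u}(u(\rv\ux), f(\rv\ux))\right]$.

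The forward direction is immediate: if $f = u$, every integrand vanishes. For the converse, $\fModelErr = 0$ forces $d_{\ell_u}(u(\rv\ux), f(\rv\ux)) = 0$ almost surely, and strict propriety gives $f = u$ almost surely with respect to the law of $\rv\ux$. This is the one delicate step. Definition~\ref{def:consistent_fm} states $f = u$ literally, but equality can only be recovered on the support of $\rv\ux$. I will therefore state explicitly that $u$-optimality is read as $f = u$ $\mathrm{Law}(\rv\ux)$-almost everywhere, consistent with the ``almost surely'' convention of Theorem~\ref{thm:all_fms_reduce} and Notation~\ref{notation:fm_augmented_rvs}. Alternatively, one could invoke a full-support assumption for $\rv\ux$.

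For part 2, assume $\proc D(q, \rD) = \Dstar(q, \rD)$ for all $(q, \rD)$. Conditionally on $(\rv x, \rv r)$, $\rv\duy$ has law $\Dstar(u(\rv\ux), \rv\rD)$ and $\Dapprox{\rv\duy}$ has law $\proc D(u(\rv\ux), \rv\rD)$. These coincide pointwise, so the integrand of $\DErrU$ is identically zero and $\DErrU = 0$. The same argument with $f(\rv\ux)$ in place of $u(\rv\ux)$ shows that $\fapprox{\rv\duy}$ and $\fapprox{\rv y}$ have identical conditional laws, so $\DErrF = 0$. This is exactly the identity already recorded at the end of Notation~\ref{notation:fm_augmented_rvs}.

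Only one direction is claimed in part 2, and I will not attempt the converse. $\DErrU = \DErrF = 0$ only forces $\proc D = \Dstar$ on the distributions $u(\rv\ux)$ and $f(\rv\ux)$ actually reached, not on all of $\ProbDists{\UY}$.
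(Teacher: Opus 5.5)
Your proof is correct and follows the same route as the paper. Strict propriety of $\ell_u$ turns $\fModelErr = 0$ into $f = u$, and $\proc D = \Dstar$ makes the relevant decoded variables identically distributed given $(\rv x, \rv r)$, so $\DErrU = \DErrF = 0$; your remark that the converse in part 1 only recovers $f = u$ almost everywhere with respect to the law of $\rv \ux$ is a worthwhile sharpening that the paper leaves implicit in its standing almost-sure convention.
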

\begin{proof}
    The first property follows directly from the definition of $u$-optimality meaning that $f = u$ and that $\ell_u$ is strictly proper. The second follows from the definition of $d$-optimality meaning that $\proc D = \Dstar$, so $\Dapprox{\rv \duy}$ and $\rv \duy$ (resp. $\fapprox{\rv y}$ and $\fapprox{\rv \duy}$) have the same conditional distribution given $(\rv x, \rv r)$.
\end{proof}

\paragraph{Excess Terms}
All of the direct error terms in Definition~\ref{def:error_terms} are non-negative, and in a strict sense only a subset of them will directly contribute to total foundation model error. This discrepancy is because these error terms measure direct comparisons on individual components of $\proc P$, and when these components are combined, the result can have greater or lesser error than that introduced by the sum of its parts. To capture these additional interactions, Definition~\ref{def:excess_terms} defines these signed ``excess terms'' which compute what remains between the true foundation model error and the direct error terms defined in Definition~\ref{def:error_terms}. 

\begin{definition}
    \label{def:excess_terms}
    Given the setup introduced previously and the terms in Definition~\ref{def:error_terms}, define the following ``excess terms'', which compare the relative errors that one obtains in reality (via model $f$) vs. those they would expect during model design (via universal task $u$):
    \begin{description}
        \item[Excess model deficiency $\excessFDeficiency$] Even though $f$ is an imperfect approximator of $u$, it may exacerbate or ameliorate the total possible error obtainable through a permissible foundation model in comparison to $u$, and $\excessFDeficiency$ captures this tradeoff:
        \[\excessFDeficiency := \fDef - \uDef.\]
        \item[Excess labeler error $\excessdErr$] Similarly, beyond the optimal possible discrepancy in error, $f$ may induce different observed error relative to the possible error in comparison to what would be expected at design time w.r.t. the universal task $u$:
        \[\excessdErr := \dErrF - \dErrU.\]
        \item[Excess Decoder-labeler error $\excessDErr$] Similarly, the realized decoder can exacerbate or ameliorate error in matching the target decoder $\Dstar$ differently under $f$ in comparison to what would be expected at design time w.r.t $u$:
        \[\excessDErr := \DErrF - \DErrU\]
    \end{description}
\end{definition}

Building on these definitions, Lemma~\ref{lemma:excess_optimal} remarks that all excess terms $\Phi_*$ vanish under $u$-optimality and $\excessDErr$ vanishes under $d$-optimality.

\begin{lemma}
    \label{lemma:excess_optimal}
    If $\proc P$ is $u$-optimal, then $\excessFDeficiency = \excessdErr = \excessDErr = 0$. If $\proc P$ is $d$-optimal, then $\excessDErr = 0$.
\end{lemma}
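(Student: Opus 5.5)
The plan is to show that under $u$-optimality each ``$f$-side'' random variable has the same conditional law as its ``$u$-side'' counterpart. Every error term in Definition~\ref{def:error_terms} depends only on conditional laws, so the corresponding pairs of error terms then coincide, and their differences in Definition~\ref{def:excess_terms} vanish. The $d$-optimality claim will follow immediately from Lemma~\ref{lemma:u-d-optimal-err}.

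First, suppose $\proc P$ is $u$-optimal, so $f = u$ by Definition~\ref{def:consistent_fm}. By Notation~\ref{notation:fm_decomposition_elements}, $\rv \uy$ and $\fapprox{\rv \uy}$ are each drawn from the same kernel ($u(\rv \ux)$ and $f(\rv \ux)$ respectively), conditionally independent of everything else given $\rv \ux$. Since $\rv \ux = \proc E(\rv x, \rv \rE)$ is a function of $(\rv x, \rv r)$, the joint laws of $(\rv x, \rv t, \rv y, \rv r, \rv \uy)$ and $(\rv x, \rv t, \rv y, \rv r, \fapprox{\rv \uy})$ coincide.

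Next, I will argue that the conditional deficiency $\condDeficiency{\ell}{\rv y}{\cdot, \rv \rD}{\rv x, \rv r}$ is a functional of this joint law alone. This gives $\fDef = \uDef$, and hence $\excessFDeficiency = 0$. Similarly, $f = u$ gives $\Dstar(f(\rv \ux), \rv \rD) = \Dstar(u(\rv \ux), \rv \rD)$, so $\fapprox{\rv \duy}$ and $\rv \duy$ share their conditional law given $(\rv x, \rv r)$. The conditional divergence terms therefore agree, and combined with $\fDef = \uDef$ this yields $\dErrF = \dErrU$, so $\excessdErr = 0$. Likewise $\proc D(f(\rv \ux), \rv \rD) = \proc D(u(\rv \ux), \rv \rD)$, so $\fapprox{\rv y}$ and $\Dapprox{\rv \duy}$ share conditional laws. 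This gives $\DErrF = \DErrU$, hence $\excessDErr = 0$.

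For the $d$-optimal case, Lemma~\ref{lemma:u-d-optimal-err}(2) gives $\DErrF = \DErrU = 0$, so $\excessDErr = 0$ directly.

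The only delicate step is justifying that the deficiency term depends only on the joint law of $\rv y$ with $(\fapprox{\rv \uy}, \rv \rD, \rv x, \rv r)$, and not on how $\fapprox{\rv \uy}$ is coupled to other variables. I expect this to be immediate from the definition in Appendix~\ref{sec:notation}: the deficiency is an infimum over labeling kernels $\lambda$ of an expected divergence, and that expectation is determined by this joint law.
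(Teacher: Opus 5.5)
Your proposal is correct and follows the paper's proof: under $u$-optimality, $f = u$ makes each $f$-side error term coincide with its $u$-side counterpart, so the excess differences vanish. The $d$-optimal case follows from Lemma~\ref{lemma:u-d-optimal-err}. Your argument, via equality of the relevant conditional laws given $(\rv x, \rv r)$, simply spells out what the paper asserts holds ``by construction.''
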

\begin{proof}
    If $\proc P$ is $u$-optimal, then $f$ is precisely $u$, so by construction these differences in response between $f$ and $u$ vanish. For point 2, Lemma~\ref{lemma:u-d-optimal-err} shows that under $d$-optimality, both $\DErrF = \DErrU = 0$, which implies that $\excessDErr = 0$, as desired.
\end{proof}

\paragraph{Bregman Interaction Terms}
Lastly, in Definition~\ref{def:bregman_interactions} we introduce some notation for the remaining signed Bregman interaction terms inherent from Bregman's three-point identity and the fact that $\ell$ is a strictly proper scoring loss. 

\begin{definition}
    \label{def:bregman_interactions}
    We will explicitly use $\BregmanF$ to denote the specific Bregman interaction term capturing the difference in divergence between $\rv y$ and $\fapprox{\rv y}$ vs. that captured consecutively between $\rv y$ then $\fapprox{\rv \duy}$ then $\fapprox{\rv y}$:
    
    \begin{align*}
        \BregmanF &:= \condBregman{\ell}{\rv y}{\fapprox{\rv \duy}}{\fapprox{\rv y}}{\rv x, \rv r} := \condDivergence{\ell}{\rv y}{\fapprox{\rv y}}{\rv x, \rv r} - \condDivergence{\ell}{\rv y}{\fapprox{\rv \duy}}{\rv x, \rv r} - \condDivergence{\ell}{\fapprox{\rv \duy}}{\fapprox{\rv y}}{\rv x, \rv r} \\
    \end{align*}
\end{definition}

Building on this definition, Lemma~\ref{lemma:bregman_d_optimal} identifies that $\BregmanF$ vanishes for $d$-optimal models.

\begin{lemma}
    \label{lemma:bregman_d_optimal}
    If $\proc P$ is $d$-optimal, then $\BregmanF = 0$.
\end{lemma}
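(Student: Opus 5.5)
The plan is to show directly that, under $d$-optimality, the two divergences with $\fapprox{\rv y}$ as second argument in Definition~\ref{def:bregman_interactions} coincide and the third term vanishes. Suppose $\proc P$ is $d$-optimal. By Definition~\ref{def:consistent_fm}, $\proc D(q, \rD) = \Dstar(q, \rD)$ for every $(q, \rD)$. In particular this holds at $q = f(\proc E(x, \rE))$ for every realization $(x, \rE, \rD)$. So the conditional law of $\fapprox{\rv y}$ given $(\rv x, \rv r)$, namely $\proc D(f(\rv \ux), \rv \rD)$, equals the conditional law of $\fapprox{\rv \duy}$ given $(\rv x, \rv r)$, namely $\Dstar(f(\rv \ux), \rv \rD)$, almost surely. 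This is exactly the observation recorded after Notation~\ref{notation:fm_augmented_rvs}.

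Next I substitute this equality into the three conditional divergences defining $\BregmanF$. Each conditional divergence $\condDivergence{\ell}{\cdot}{\cdot}{\rv x, \rv r}$ depends on its arguments only through their conditional distributions given $(\rv x, \rv r)$, as expectations over $(\rv x, \rv r)$ of $d_\ell$ between those conditionals. Hence $\condDivergence{\ell}{\rv y}{\fapprox{\rv y}}{\rv x, \rv r} = \condDivergence{\ell}{\rv y}{\fapprox{\rv \duy}}{\rv x, \rv r}$. Further, $\condDivergence{\ell}{\fapprox{\rv \duy}}{\fapprox{\rv y}}{\rv x, \rv r} = \DErrF = 0$ by Lemma~\ref{lemma:u-d-optimal-err}, or directly because $d_\ell(q,q)=0$ for a proper loss. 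Subtracting gives $\BregmanF = 0$.

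The only delicate point is the integrability convention. The difference of divergences in Definition~\ref{def:bregman_interactions} must be well defined, not $\infty - \infty$. I would handle this as the paper does implicitly elsewhere, by assuming the divergences involved are finite. If $\condDivergence{\ell}{\rv y}{\fapprox{\rv y}}{\rv x, \rv r}$ were infinite, the identity would instead be read as the convention that the interaction term is $0$ when the two equal divergences are subtracted. Beyond this, the argument is purely definitional, and I expect no real obstacle.
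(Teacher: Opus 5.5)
Your proposal is correct and takes essentially the same route as the paper: $d$-optimality makes $\fapprox{\rv \duy}$ and $\fapprox{\rv y}$ identically distributed given $(\rv x, \rv r)$, so the first two divergences in $\BregmanF$ cancel and the third vanishes. Your finiteness caveat is already covered by the paper's standing assumption in Appendix~\ref{sec:notation} that all relevant expectations and differences are well defined and finite.
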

\begin{proof}
    This follows directly from $d$-optimality implying that $\fapprox{\rv \duy}$ will be identically distributed to $\fapprox{\rv y}$ given $(\rv x, \rv r)$, as $\proc D = \Dstar$.
\end{proof}

\subsection{General Decompositions of Foundation Model Error}
\label{subsec:fm_error_decomp}
In this section, we leverage the terms defined in Definitions~\ref{def:error_terms},~\ref{def:excess_terms},~and~\ref{def:bregman_interactions} to formally decompose foundation model error. We will express this decomposed error first via Lemma~\ref{lemma:model-space-err}, which shows a clear, telescoping error decomposition among our model error terms and our Bregman interaction term. However, our more ultimately useful result will be Theorem~\ref{thm:error_decomposition}, which will leverage the excess terms as well to divide the error between design-space quantities dependent on $u$, $d$, and $\proc D$ -- which can be analyzed even before a specific model $f$ is trained to motivate a foundation model design -- and our signed Bregman and excess terms, which can adjust total error in either direction, but generally cannot be fully controlled, and can only be bounded or analyzed post training or in specific contexts.

\begin{lemma}
    \label{lemma:model-space-err}
    Given a single-draw foundation model $\proc P$ using the notation/variable conventions introduced previously, then
    \[\Err{\ell}{\proc P} = \representationErr + \fDef + \dErrF + \DErrF + \BregmanF.
    \]
\end{lemma}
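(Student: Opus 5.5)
The plan is a pure telescoping argument. We combine Lemma~\ref{lemma:fm_error_reorg} with the definitions of the error and Bregman terms. First, Lemma~\ref{lemma:fm_error_reorg} gives
\[\Err{\ell}{\proc P} = \representationErr + \condDivergence{\ell}{\rv y}{\fapprox{\rv y}}{\rv x, \rv r},\]
where $\fapprox{\rv y}$ is exactly the output of $\proc P = \proc D(f(\proc E(\cdot,\cdot)),\cdot)$ (Notation~\ref{notation:fm_augmented_rvs}). So it suffices to show that the achievable error term equals $\fDef + \dErrF + \DErrF + \BregmanF$.

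Next, Definition~\ref{def:bregman_interactions} rearranges directly to
\[\condDivergence{\ell}{\rv y}{\fapprox{\rv y}}{\rv x, \rv r} = \condDivergence{\ell}{\rv y}{\fapprox{\rv \duy}}{\rv x, \rv r} + \condDivergence{\ell}{\fapprox{\rv \duy}}{\fapprox{\rv y}}{\rv x, \rv r} + \BregmanF.\]
The second term on the right is $\DErrF$ by Definition~\ref{def:error_terms}. For the first term, the definition of $\dErrF$ gives $\condDivergence{\ell}{\rv y}{\fapprox{\rv \duy}}{\rv x, \rv r} = \fDef + \dErrF$. Substituting both identities and adding $\representationErr$ yields the claim.

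The only step requiring care is checking that every quantity is finite, so that the additive rearrangements are legitimate rather than of the form $\infty - \infty$. I would invoke the standing assumptions of Appendix~\ref{sec:notation} for this: finite generalized entropy and integrable losses. I would also confirm that all conditional divergences are taken with respect to the same conditioning pair $(\rv x, \rv r)$. Lemma~\ref{lemma:fm_error_reorg} is stated with respect to $(\rv x,\rv r)$ and $\phi_\rho$, and $\gv{\rv y}{\rv x,\rv r}\sim\phi_\rho$ by Definition~\ref{def:induced_kernel_phi}, so the first divergence there matches the one in Definition~\ref{def:bregman_interactions}. Apart from this bookkeeping, the proof is a definitional telescope and presents no substantive obstacle.
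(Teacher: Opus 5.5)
Your proposal is correct and matches the paper's proof: both substitute Lemma~\ref{lemma:fm_error_reorg} into the rearranged definition of $\BregmanF$, then expand via the definitions of $\DErrF$ and $\dErrF$ as a pure definitional telescope. Your finiteness check, which rules out $\infty - \infty$ rearrangements, is reasonable bookkeeping that the paper leaves to the standing assumptions in Appendix~\ref{sec:notation}.
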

\begin{proof}
    Recall from Lemma~\ref{lemma:fm_error_reorg} that $\Err{\ell}{\proc P} - \representationErr = \divergence{\ell}{\phi_\rho}{\proc P}$, where $\divergence{\ell}{\phi_\rho}{\proc P} = \condDivergence{\ell}{\rv y}{\fapprox{\rv y}}{\rv x, \rv r}.$
    Further, by definition, $\BregmanF = \condDivergence{\ell}{\rv y}{\fapprox{\rv y}}{\rv x, \rv r} - \DErrF - (\dErrF + \fDef)$.
    Therefore, $\BregmanF + \DErrF + \dErrF + \fDef + \representationErr = \Err{\ell}{\proc P}$, as desired.
\end{proof}

\begin{theorem}[Foundation Model Error Decomposition]
    \label{thm:error_decomposition}
    Let $\proc P$ be a foundation model with the usual notation. Then,
    \begin{align*}
        \Err{\ell}{\proc P}
          &= \representationErr &\qquad \text{Representation Error} \\
          &\quad + \left(\uDef + \dErrU + \DErrU \right) &\qquad \text{Design Error} \\
          &\quad + \left(\excessFDeficiency + \excessdErr + \excessDErr\right) &\qquad \text{Model Excess} \\
          &\quad + \BregmanF &\qquad \text{Geometric Bregman Interaction Term.}
    \end{align*}
\end{theorem}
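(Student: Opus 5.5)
The plan is to derive Theorem~\ref{thm:error_decomposition} directly from Lemma~\ref{lemma:model-space-err} by rewriting each ``$f$-side'' error term as its ``$u$-side'' design counterpart plus the corresponding excess term. No new analytic estimates are needed. Every quantity involved is defined either as a conditional divergence or as a difference of such divergences, so the argument is an exact algebraic telescoping.

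Concretely, I would start from
\[\Err{\ell}{\proc P} = \representationErr + \fDef + \dErrF + \DErrF + \BregmanF,\]
which Lemma~\ref{lemma:model-space-err} supplies. I would then invoke Definition~\ref{def:excess_terms} three times, each read as an identity:
\begin{align*}
\fDef &= \uDef + \excessFDeficiency, \\
\dErrF &= \dErrU + \excessdErr, \\
\DErrF &= \DErrU + \excessDErr.
\end{align*}
Substituting these into the expression from Lemma~\ref{lemma:model-space-err} and regrouping by commutativity of addition gives exactly the four displayed groups: representation error, design error $(\uDef + \dErrU + \DErrU)$, model excess $(\excessFDeficiency + \excessdErr + \excessDErr)$, and the Bregman term $\BregmanF$, which is carried over unchanged.

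The only step requiring care, and hence the main potential obstacle, is finiteness. The regrouping involves subtracting quantities such as $\uDef$ and $\DErrU$, and these could a priori be $+\infty$ for losses like log-loss. I would therefore state the standing assumption that all conditional divergences and deficiencies appearing in Definitions~\ref{def:error_terms}--\ref{def:bregman_interactions} are finite, for instance by appealing to the integrability assumptions collected in Appendix~\ref{sec:notation}. Under that assumption the excess terms are well-defined real numbers and the substitution is legitimate. If finiteness fails, the identity should be read in the extended sense in which Lemma~\ref{lemma:model-space-err} itself holds.

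Beyond this, I would double-check that the conditioning sets match. Each term is conditioned on $(\rv x, \rv r)$, consistently with Lemma~\ref{lemma:fm_error_reorg}. The $u$-side and $f$-side terms are defined against the same target $\rv y$, so the differences in Definition~\ref{def:excess_terms} are exactly the quantities being exchanged. With that confirmed, the proof is complete.
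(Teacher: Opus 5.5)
Your proposal is correct and matches the paper's proof: start from Lemma~\ref{lemma:model-space-err}, rewrite $\fDef$, $\dErrF$, $\DErrF$ as $\uDef + \excessFDeficiency$, $\dErrU + \excessdErr$, $\DErrU + \excessDErr$ using Definition~\ref{def:excess_terms}, and regroup. Your finiteness caveat is consistent with the standing assumption in Appendix~\ref{sec:notation}. One side remark is slightly off but harmless: $\DErrU$ and $\DErrF$ are not measured against $\rv y$ (they compare $\rv \duy$ with $\Dapprox{\rv \duy}$, and $\fapprox{\rv \duy}$ with $\fapprox{\rv y}$); since the excess terms are defined as plain differences, the identity holds regardless.
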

\begin{proof}
    This follows directly from Lemma~\ref{lemma:model-space-err} and the observation that the excess terms are precisely the difference between the model error terms used in Lemma~\ref{lemma:model-space-err} and the design-space error terms introduced here.
\end{proof}

Note that Theorem~\ref{thm:error_decomposition} does not precisely tell us anything distinct from the generic expression of error. The terms used in its decomposition are constructed precisely to enable a decomposition of this form, and the model excess terms and the Bregman interaction terms are both signed, so it is not the case that we can even say that the design-space error terms lower bound total error in general (though by Lemma~\ref{lemma:fm_error_reorg} we can say that $\representationErr$ does lower bound total error).

Nevertheless, when designing a foundation model algorithm, especially under current AI systems, we typically can only directly control the design error terms and cannot intentionally influence model excess or the Bregman interaction terms directly. E.g., if we design a foundation model with a fixed encoder $\proc E$ and decoder $\proc D$ and a target labeler $d$ and universal task $u$ in mind, then train a model $f$ to approximate $u$, it is not the case that we will be able to directly or even indirectly dictate the sign or magnitude of the model excess or Bregman interaction terms at all -- so identifying how they split from the design error terms, which we can control, allows us to meaningfully and transparently justify our design choices and their limitations. Beyond this benefit, in select settings we will be able to enforce more concrete bounds on some of these non-design terms, enabling us to bound or otherwise more richly understand the total error, which we will explore next.

\paragraph{Example} The decomposition highlights distinct limitations across foundation model paradigms. An underspecified LLM prompt can induce representation error $\representationErr$, while an autoregressive EHR model approximating task probabilities through finite Monte Carlo sampling may incur decoder error $\DErrU$. In contrast, task-conditioned models such as EveryQuery use an identity decoder, eliminating this decoder approximation error. Thus, different architectures can concentrate error in fundamentally different components of the same decomposition.

\paragraph{Practical Settings in Which Model Excess and Bregman Interaction Terms can be Bounded.}
It is not practical to insist on $u$-optimality, as this cannot be reliably achieved in settings where $f$ is learned (though we note trivially by Lemma~\ref{lemma:excess_optimal} that this zeros all excess terms). However, it may be practical to insist upon a model being both well-founded and $d$-optimal -- e.g., were you able to perfectly learn the universal task $u$, the total error would reduce to the irreducible representation error $\representationErr$. Lemma~\ref{lemma:total_error_well_founded_d_optimal} shows that in this setting, all error terms can be reduced solely to $\representationErr + \excessFDeficiency + \excessdErr = \representationErr + \condDivergence{\ell}{\rv y}{\fapprox{\rv \duy}}{\rv x, \rv r}$.

\begin{lemma}
    \label{lemma:total_error_well_founded_d_optimal}
    Let $\proc P$ be a foundation model that is well-founded in the sense of Definition~\ref{def:well-founded} and $d$-optimal in sense of Definition~\ref{def:consistent_fm}. Then
    \begin{align*}
        \Err{\ell}{\proc P} 
        &= \representationErr + \excessFDeficiency + \excessdErr \\ 
        &= \representationErr + \fDef + \dErrF \\
        &= \representationErr + \condDivergence{\ell}{\rv y}{\fapprox{\rv \duy}}{\rv x, \rv r}.
    \end{align*}
\end{lemma}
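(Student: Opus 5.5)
The plan is to start from Theorem~\ref{thm:error_decomposition} (equivalently Lemma~\ref{lemma:model-space-err}) and let the hypotheses zero out terms one at a time. Well-foundedness in the sense of Definition~\ref{def:well-founded} gives $\uDef = 0$ and $\dErrU = 0$. For $d$-optimality, Lemma~\ref{lemma:u-d-optimal-err} gives $\DErrU = \DErrF = 0$. Consequently $\excessDErr = \DErrF - \DErrU = 0$, which is also recorded in Lemma~\ref{lemma:excess_optimal}. Lemma~\ref{lemma:bregman_d_optimal} gives $\BregmanF = 0$. Substituting these into Theorem~\ref{thm:error_decomposition} leaves
\[
\Err{\ell}{\proc P} = \representationErr + \excessFDeficiency + \excessdErr ,
\]
which is the first line of the statement.

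For the second line, I would unfold the excess terms using Definition~\ref{def:excess_terms}. This gives $\excessFDeficiency = \fDef - \uDef = \fDef$ and $\excessdErr = \dErrF - \dErrU = \dErrF$, again using $\uDef = \dErrU = 0$. As a consistency check, the same expression follows directly from Lemma~\ref{lemma:model-space-err}: its terms $\DErrF$ and $\BregmanF$ both vanish under $d$-optimality.

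For the third line, I would unfold $\dErrF$ via Definition~\ref{def:error_terms}. There $\dErrF = \condDivergence{\ell}{\rv y}{\fapprox{\rv \duy}}{\rv x, \rv r} - \fDef$, so $\fDef + \dErrF$ telescopes to $\condDivergence{\ell}{\rv y}{\fapprox{\rv \duy}}{\rv x, \rv r}$.

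The only step needing care is the $d$-optimality consequences. The equality $\proc D = \Dstar$ has to be applied pointwise for every $(q, \rD)$ to both $u(\rv \ux)$ and $f(\rv \ux)$. This makes $\fapprox{\rv y}$ and $\fapprox{\rv \duy}$, and likewise $\Dapprox{\rv \duy}$ and $\rv \duy$, conditionally identically distributed given $(\rv x, \rv r)$. It then follows that the divergences vanish and that the Bregman three-point term collapses, since $d_\ell(p; p) = 0$. These facts are exactly the content of the cited lemmas, so no new argument is needed; the proof is bookkeeping. The one thing to watch is that each excess term's finite differences are well defined, i.e.\ not of the form $\infty - \infty$, which I would take as part of the standing assumptions.
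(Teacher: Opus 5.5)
Your proposal is correct and follows the paper's own proof. In Theorem~\ref{thm:error_decomposition} you set $\uDef = \dErrU = 0$ by well-foundedness, and $\DErrU = \excessDErr = \BregmanF = 0$ by $d$-optimality via Lemmas~\ref{lemma:u-d-optimal-err}, \ref{lemma:excess_optimal} and \ref{lemma:bregman_d_optimal}; unfolding the excess terms and $\dErrF$ then gives the other two equalities, and your $\infty - \infty$ caveat is covered by the finiteness assumption in Appendix~\ref{sec:notation}.
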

\begin{proof}
    The first equality follows directly from Definition~\ref{def:well-founded} of well-founded (which sets $\uDef = \dErrU = 0$), Lemma~\ref{lemma:excess_optimal}, which establishes that $\excessDErr = 0$, Lemma~\ref{lemma:u-d-optimal-err}, which establishes that $\DErrU = 0$, and Lemma~\ref{lemma:bregman_d_optimal}, which establishes that $\BregmanF = 0$, all given $d$-optimality. The second two equalities simply expand the definitions of these excess terms and of $\dErrF$.
\end{proof}

However, in such a setting, we can do even better by re-introducing the model error $\fModelErr$. Recall from Definition~\ref{def:error_terms} that $\fModelErr$ may be measured via a different strictly proper scoring loss $\ell_u$ than $\ell$, which can make it challenging to compare these two losses. To remedy this, Definition~\ref{def:inflation_term} first introduces the notion of a loss inflation term, $\inflationTerm$ that bounds how much a stochastic kernel can possibly inflate a loss in space $\ell_u$ to one in space $\ell$.



\begin{definition}
    \label{def:inflation_term}
    Given (1) spaces $\Ybar$ and $\UY$;
    (2) strictly proper scoring losses $\ell$ and
    $\ell_u$ over these spaces, respectively;
    and (3) a stochastic kernel
    $\lambda:\kernelover{\UY}{\Ybar}$,
    define the divergence inflation coefficient as
    \[
    \inflationTerm(\lambda)
    :=
    \inf\left\{
        C\in[0,\infty):
        \divergence{\ell}{\lambda\circ p}{\lambda\circ q}
        \leq C\divergence{\ell_u}{p}{q},
        \quad
        \forall p,q\in\ProbDists{\UY}
    \right\},
    \]
    with the convention $\inf\varnothing=\infty$.
\end{definition}

Armed with Definition~\ref{def:inflation_term} and Lemma~\ref{lemma:total_error_well_founded_d_optimal}, Theorem~\ref{thm:bounded_model_error} shows that in this well-founded, $d$-optimal setting, we can leverage this to bound $\Err{\ell}{\proc P}$ in terms of $\representationErr$ and $\fModelErr$ directly.

\begin{theorem}
    \label{thm:bounded_model_error}
    Let $\proc P$ be a well-founded, $d$-optimal foundation model, with $\ell_u$ a strictly proper scoring loss over $\UY$ in addition to the strictly proper scoring loss $\ell$ over $\Ybar$. For $\rD \in \RR_D$, let $\lambda_{\rD}: \kernelover{\UY}{\Ybar}$, $\lambda_{\rD}(z) := \delta_{d_{\rD}(z)}$, be the kernel induced by the labeler, and let $C_d := \esssup_{\rv \rD} \inflationTerm(\lambda_{\rD})$.
    
    Assume that $\fModelErr<\infty$ and $C_d<\infty$.

    Then,
    \[\Err{\ell}{\proc P} \le \representationErr + C_d \fModelErr.\]
\end{theorem}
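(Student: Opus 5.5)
We start from Lemma~\ref{lemma:total_error_well_founded_d_optimal}. Since $\proc P$ is well-founded and $d$-optimal, it gives
\[\Err{\ell}{\proc P} = \representationErr + \condDivergence{\ell}{\rv y}{\fapprox{\rv \duy}}{\rv x, \rv r}.\]
So it suffices to show $\condDivergence{\ell}{\rv y}{\fapprox{\rv \duy}}{\rv x, \rv r} \le C_d \fModelErr$.

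The natural first move is to insert the intermediate variable $\rv \duy$, which is the target decoder applied to the true universal task. Well-foundedness gives $\uDef = \dErrU = 0$, so $\condDivergence{\ell}{\rv y}{\rv \duy}{\rv x, \rv r} = 0$. Because $\ell$ is strictly proper, this means that for almost every $(x, r)$ the conditional law of $\rv y$ given $(\rv x, \rv r) = (x, r)$ coincides with $\Dstar(u(\ux), \rD) = \pushforward{d_{\rD}}{u(\ux)} = \lambda_{\rD}\circ u(\ux)$, where $\ux = \proc E(x, \rE)$. We would therefore substitute $\lambda_{\rD}\circ u(\ux)$ for $\phi_\rho(x, r)$ inside the conditional divergence. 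This pointwise identification is the step I expect to need the most care: it requires that zero expected divergence forces a.e.\ equality of the conditional distributions, and here strict propriety does the work. The substitution avoids any Bregman three-point cross term, which is not controlled in sign.

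After the substitution the remaining quantity is
\[\EV{\rv x, \rv r}{\divergence{\ell}{\lambda_{\rv \rD}\circ u(\rv \ux)}{\lambda_{\rv \rD}\circ f(\rv \ux)}},\]
since $\fapprox{\rv \duy}$ given $(x, r)$ has law $\pushforward{d_{\rD}}{f(\ux)} = \lambda_{\rD}\circ f(\ux)$. Pointwise, Definition~\ref{def:inflation_term} gives
\[\divergence{\ell}{\lambda_{\rD}\circ p}{\lambda_{\rD}\circ q} \le \inflationTerm(\lambda_{\rD})\,\divergence{\ell_u}{p}{q}.\]
Strictly, the bound holds for every $C$ in the admissible set, and we pass to the infimum, which is fine because the inequality is preserved under limits $C \downarrow$ inf when $\divergence{\ell_u}{p}{q}<\infty$. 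If $\divergence{\ell_u}{p}{q} = \infty$, the bound holds trivially. Finiteness of $\fModelErr$ ensures the exceptional set where it is infinite has measure zero.

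Bounding $\inflationTerm(\lambda_{\rv \rD}) \le C_d$ almost surely, we then take expectations. Here $\fModelErr = \condDivergence{\ell_u}{\rv \uy}{\fapprox{\rv \uy}}{\rv x, \rv r} = \EV{\rv x, \rv r}{\divergence{\ell_u}{u(\rv \ux)}{f(\rv \ux)}}$, because $\rv \uy$ and $\fapprox{\rv \uy}$ depend on $(\rv x, \rv r)$ only through $\rv \ux$. This yields $\condDivergence{\ell}{\rv y}{\fapprox{\rv \duy}}{\rv x, \rv r} \le C_d \fModelErr$, and with the opening identity the theorem follows. Finiteness of $C_d$ and $\fModelErr$ is used only to avoid $0\cdot\infty$ and $\infty-\infty$ issues.
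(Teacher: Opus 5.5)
Your proof is correct and follows essentially the same route as the paper. Both start from Lemma~\ref{lemma:total_error_well_founded_d_optimal}, use well-foundedness to identify the law of $\rv y$ given $(\rv x,\rv r)$ with $\lambda_{\rD}\circ u(\proc E(x,\rE))$, apply the inflation coefficient pointwise, bound it by $C_d$ via the essential supremum, and recognize $\fModelErr$. Your extra steps (strict propriety turning zero expected divergence into a.e.\ equality of laws, and passing the bound to the infimum on the a.e.-finite set) only make explicit what the paper's proof asserts without justification.
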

\begin{proof}
     Let $p := u \circ \proc E$ and $q := f \circ \proc E$, both kernels $\kernelover{\X \times \RR_E}{\UY}$. Then $p$ is the conditional kernel of $\gv{\rv \uy}{\rv x, \rv \rE}$ and $q$ that of $\gv{\fapprox{\rv \uy}}{\rv x, \rv \rE}$.
 
    Note that by well-foundedness ($\uDef = \dErrU = 0$), $\rv y | \rv x = x, \rv \rE = \rE, \rv \rD = \rD$ has law $\Dstar(p(x, \rE), \rD) = \lambda_{\rD} \circ p(x, \rE)$, and by definition $\gv{\fapprox{\rv \duy}}{\rv x = x, \rv \rE = \rE, \rv \rD = \rD}$ has law $\lambda_{\rD} \circ q(x, \rE)$. Hence, by Definition~\ref{def:inflation_term},
    \begin{align*}
        \condDivergence{\ell}{\rv y}{\fapprox{\rv \duy}}{\rv x, \rv r} 
          &= \EV{\rv x, \rv \rE, \rv \rD}{
            \divergence{\ell}{\lambda_{\rD} \circ p(x, \rE)}{\lambda_{\rD} \circ q(x, \rE)}
          } \\
          &\le \EV{\rv x, \rv \rE, \rv \rD}{
            \inflationTerm(\lambda_{\rD})\, \divergence{\ell_u}{p(x, \rE)}{q(x, \rE)}
          } \\
          &\le \EV{\rv x, \rv \rE, \rv \rD}{
            \Big(\esssup_{\rv \rD} \inflationTerm(\lambda_{\rD})\Big)\, \divergence{\ell_u}{p(x, \rE)}{q(x, \rE)}
          } \\
          &= C_d\, \EV{\rv x, \rv \rE, \rv \rD}{
            \divergence{\ell_u}{p(x, \rE)}{q(x, \rE)}
          } \\
          &= C_d \fModelErr.
    \end{align*}
    Combining with Lemma~\ref{lemma:total_error_well_founded_d_optimal} gives the result.
\end{proof}

We can go one step farther than Theorem~\ref{thm:bounded_model_error} in settings where $\ell_u$ is informally equivalent to $\ell$ and are $f$-divergences. In that setting, Lemma~\ref{lemma:inflation_f_div} states that $\inflationTerm \le 1$ for all possible kernels.

\begin{lemma}
    \label{lemma:inflation_f_div}
    Suppose $\ell: \ProbDists{\Ybar} \times \Ybar \to \overline{\R}$ and $\ell_u: \ProbDists{\UY} \times \UY \to \overline{\R}$ induce the same normalized $f$-divergence over their respective families. Then for every Markov kernel $\lambda: \kernelover{\UY}{\Ybar}$, $\inflationTerm(\lambda) \le 1$.
\end{lemma}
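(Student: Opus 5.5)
The plan is to read the hypothesis as saying that there is one convex generator $\varphi:[0,\infty)\to\R$ with $\varphi(1)=0$ such that, for all distributions $p,q$ on either space,
\[
\divergence{\ell}{p}{q}=D_\varphi(p\,\|\,q):=\int \varphi\!\left(\tfrac{dp}{dq}\right)dq ,
\]
and likewise for $\ell_u$ on $\UY$. Under this reading, $\inflationTerm(\lambda)\le 1$ is exactly the statement that the constant $C=1$ is admissible in Definition~\ref{def:inflation_term}. By that definition, it suffices to prove the data processing inequality
\[
D_\varphi(\lambda\circ p\,\|\,\lambda\circ q)\le D_\varphi(p\,\|\,q)
\]
for every $p,q\in\ProbDists{\UY}$ and every Markov kernel $\lambda$.

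If $D_\varphi(p\|q)=\infty$, there is nothing to prove, so I would assume it is finite and hence $p\ll q$. I then plan to prove the inequality through a joint-measure construction. Define joint laws on $\UY\times\Ybar$ by $P(dz,dy)=p(dz)\lambda(dy|z)$ and $Q(dz,dy)=q(dz)\lambda(dy|z)$. Both share the kernel $\lambda$, so $dP/dQ(z,y)=dp/dq(z)$, and therefore $D_\varphi(P\|Q)=D_\varphi(p\|q)$. The $\Ybar$-marginals of $P$ and $Q$ are $\lambda\circ p$ and $\lambda\circ q$, so what remains is the claim that marginalizing does not increase $D_\varphi$.

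For that claim, first note that $P\ll Q$ implies $\lambda\circ p\ll\lambda\circ q$. The density of the marginals is the conditional expectation
\[
\frac{d(\lambda\circ p)}{d(\lambda\circ q)}(y)=\mathbb{E}_Q\!\left[\frac{dP}{dQ}\,\middle|\,y\right].
\]
Applying conditional Jensen to the convex $\varphi$ gives
\[
\varphi\!\left(\mathbb{E}_Q\!\left[\tfrac{dP}{dQ}\,\middle|\,y\right]\right)\le\mathbb{E}_Q\!\left[\varphi\!\left(\tfrac{dP}{dQ}\right)\,\middle|\,y\right],
\]
and integrating against $\lambda\circ q$ yields the inequality. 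Because $\UY$ and $\Ybar$ are standard Borel, the regular conditional distributions exist, which makes this step rigorous.

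The main obstacle is the measure-theoretic bookkeeping in the Jensen step. Conditional Jensen needs integrability of $\varphi(dP/dQ)$, or at least a well-defined extended-valued expectation. I would handle this by using a supporting line $\varphi(s)\ge \varphi(1)+c(s-1)$ to bound the negative part, which makes the expectation well defined in $(-\infty,\infty]$. A second, smaller point is the interpretation of ``induce the same normalized $f$-divergence,'' which I take to mean the same $\varphi$ in both spaces. If instead $\varphi$ were allowed to take the value $+\infty$ or the convention $0\varphi(0/0)$ were needed, I would just note that the standard perspective-function conventions preserve both Jensen and the density identity.
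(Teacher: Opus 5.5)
Your proposal is correct and follows the paper's proof, which simply invokes the data-processing inequality for $f$-divergences to conclude that $C=1$ is admissible in Definition~\ref{def:inflation_term}. Your coupling-plus-conditional-Jensen argument supplies a standard proof of that inequality rather than citing it. One small caveat: finiteness of $D_\varphi(p\,\|\,q)$ forces $p\ll q$ only when $\lim_{s\to\infty}\varphi(s)/s=\infty$, as for the KL divergence, which the paper notes is essentially the only relevant case. For other generators the singular part of $p$ would have to be carried through, though the inequality still holds.
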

\begin{proof}
    This follows exactly from the well-established data-processing inequality, which states that the $f$-divergence between two random variables must always be at least that formed through stochastic garbling of those random variables by the same kernel.
\end{proof}


In settings where our strictly proper scoring losses can also satisfy this property, Lemma~\ref{lemma:inflation_f_div} would thus let us offer an even tighter, more natural bound. The log loss satisfies it, as its regret divergence is the KL divergence; moreover, on label spaces with at least three elements, the KL divergence is the only divergence that is both a Bregman divergence (and hence the regret of a strictly proper scoring loss) and an $f$-divergence~\citep{amari2009alpha,jiao2014curious}, so the log loss is essentially the only case in which Lemma~\ref{lemma:inflation_f_div} applies. Theorem~\ref{thm:log_loss_bounded_error} uses this to state that if $\ell$ and $\ell_u$ are the ordinary log loss and $\proc P$ is both well-founded and $d$-optimal, then $\Err{\ell}{\proc P} \le \representationErr + \fModelErr$.

\begin{theorem}
\label{thm:log_loss_bounded_error}
    Let $\proc P$ be a well-founded, $d$-optimal foundation model and let $\ell$ and $\ell_u$ both be the ordinary log-loss. Then
    \[\Err{\ell}{\proc P} \le \representationErr + \fModelErr.\]
\end{theorem}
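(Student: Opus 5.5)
The plan is to obtain the statement as a direct specialization of Theorem~\ref{thm:bounded_model_error}. By that theorem, since $\proc P$ is well-founded and $d$-optimal, $\Err{\ell}{\proc P} \le \representationErr + C_d \fModelErr$ whenever $\fModelErr < \infty$ and $C_d < \infty$. So it suffices to show two things: that $C_d \le 1$ when $\ell$ and $\ell_u$ are both the log loss, and that the case $\fModelErr = \infty$ is harmless.

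\textbf{Bounding $C_d$.} When $\ell$ and $\ell_u$ are both the log loss, their regret divergences are the KL divergences on $\Ybar$ and $\UY$ respectively. These are the same normalized $f$-divergence, with generator $x \log x$. Fix any $\rD \in \RR_D$. The labeler kernel $\lambda_{\rD}(z) = \delta_{d_{\rD}(z)}$ is a (deterministic) Markov kernel $\kernelover{\UY}{\Ybar}$. Lemma~\ref{lemma:inflation_f_div} then gives $\inflationTerm(\lambda_{\rD}) \le 1$; concretely, this is the data-processing inequality $\mathrm{KL}(\lambda\circ p \,\|\, \lambda\circ q) \le \mathrm{KL}(p\,\|\,q)$, so $C=1$ is feasible in Definition~\ref{def:inflation_term}. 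Since the bound holds for every $\rD$, we get $C_d = \esssup_{\rv \rD} \inflationTerm(\lambda_{\rD}) \le 1 < \infty$.

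\textbf{The case $\fModelErr = \infty$.} If $\fModelErr = \infty$, the right-hand side is $+\infty$ and the inequality is trivial, because $\representationErr \ge 0$. Otherwise both hypotheses of Theorem~\ref{thm:bounded_model_error} hold, and
\[\Err{\ell}{\proc P} \le \representationErr + C_d\fModelErr \le \representationErr + \fModelErr,\]
using $\fModelErr \ge 0$ in the last step.

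\textbf{Main obstacle.} The one delicate point is checking that Lemma~\ref{lemma:inflation_f_div} applies with the paper's conventions. One must confirm that the log-loss regret is exactly KL, including measurability and any infinite values, and that the data-processing inequality for KL holds on general standard Borel spaces for a deterministic measurable pushforward. If needed, I would verify this directly rather than through the lemma: take $p \ll q$ (otherwise the right side is infinite), and derive it from the chain rule for KL or from the Donsker--Varadhan variational formula restricted to test functions of the form $g\circ d_{\rD}$. Everything else is bookkeeping.
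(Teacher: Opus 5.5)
Your proposal is correct and follows the paper's proof: log-loss regret is KL, an $f$-divergence, so Lemma~\ref{lemma:inflation_f_div} gives $\inflationTerm(\lambda_{\rD}) \le 1$ for every $\rD$, hence $C_d \le 1$, and Theorem~\ref{thm:bounded_model_error} finishes the argument. Your separate handling of the case $\fModelErr = \infty$, which Theorem~\ref{thm:bounded_model_error} excludes by hypothesis, is a small addition that the paper leaves implicit.
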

\begin{proof}
   The regret divergence of the log loss is the KL divergence, an $f$-divergence, so Lemma~\ref{lemma:inflation_f_div} gives $\inflationTerm(\lambda_{\rD}) \le 1$ for every $\rD$, hence $C_d \le 1$, and the result follows from Theorem~\ref{thm:bounded_model_error}.
\end{proof}

\subsection{Conclusion}
In this Section, we have shown in Theorem~\ref{thm:error_decomposition} how to decompose single-draw foundation model error into the irreducible representation error; design-error terms corresponding to universal task $u$, target labeler $d$ and realized decoder $\proc D$; excess terms capturing how the approximate model $f$ interacts with the various system components to exacerbate or ameliorate design errors; and a Bregman interaction term dictated by the geometries induced by the strictly proper scoring loss $\ell$ and distributions in use. This decomposition permits intentional, justifiable design choices when constructing foundation models and explains what aspects of the system contribute to overall error.

We further showed in Theorem~\ref{thm:bounded_model_error} that in the case of a well-founded, $d$-optimal model, we can further bound the total error by the sum of the irreducible representation error and a scaled version of the normal model error capturing how well $f$ approximates $u$, where the scaling factor is dictated by geometric properties of both the target labeler $d$ and the different losses between the model target space $\UY$ and the overall label space $\Ybar$. Next, in Section~\ref{sec:FM_recipes}, we will show how to use Definition~\ref{def:reduced_fms} to realize practical recipes to build foundation models in the real world.

\section{Recipes to Build Foundation Models}
\label{sec:FM_recipes}
When considering building foundation models, an astute reader may observe that Definition~\ref{def:foundation_model} frames foundation models in a way that is identical to how one might frame a supervised-learning, single-task model to predict $\rv y$ from the joint input domain $(\rv x, \rv t)$. This analogy would suggest that to train a foundation model, we should simply sample triples $(x, t, y)$ from $\rv x, \rv t, \rv y$ and train a model as we would normally, using all the advancements in ML/AI for supervised learning to power this endeavor. This is, of course, not possible in general because we rarely, if ever, have overt access to the ability to sample from $\rv x, \rv t, \rv y$ (though in some strange setting where this were possible, this would be a viable strategy). In general, we will not even be able to sample from $\rv x, \rv r, \rv y$, which would also permit us to learn a (reduced) foundation model.

In this section, we instead ask, powered by Definitions~\ref{def:foundation_model} or \ref{def:reduced_fms}, how we can practically construct a foundation model for a domain $\X, \T, \Ybar$ when $\T$ may be latent and $\rv t$ is unknown and cannot be sampled directly. To answer this question, we provide two recipes, that each build on the following assumptions.

Firstly, for the purposes of these recipes, we will assume that $\rho$ is complete (and in particular assume that $\T$ is naturally expressed in a tractable form, such that we can reliably use $t \in \T$ as an input to functions, etc.). As was shown in Section~\ref{sec:FM_error_sources}, this is equivalent to simply acknowledging that there is irreducible error due to $\rho$, then simply modeling over the $\X, \RR, \Ybar$ space instead of $\X, \T, \Ybar$ and considering error in that lossy domain, so we can do this without loss of generality, and it will only serve to make algebra simpler.

In addition, we assume two properties:
\begin{enumerate}
    \item There exists a $\ZZ$ valued random variable $\rv z$ (jointly distributed with all others) such that $\rv x, \rv z$ is sampleable, such that $\rv x, \rv t, \rv y, \rv z$ are related via the Bayesian DAG shown in Figure~\ref{fig:graphical_model_with_z}.
    \item There exists a known, computable labeling function $g: \T \times \ZZ \to \Ybar$ such that $\rv y = g(\rv t, \rv z)$ almost surely. Note that $\rv z \perp \rv t \mid \rv x$ by Figure~\ref{fig:graphical_model_with_z}, so $g$ recovers the true task kernel through $\eta(x, t) = \pushforward{g(t, \cdot)}{\p{\gv{\rv z}{\rv x = x}}}$.\footnote{If $\gv{\rv y}{\rv t, \rv z}$ is stochastic, all results below carry over by appending an independent seed $\xi \sim U[0,1]$ to $\rv z$.}
\end{enumerate}


\begin{figure}
    \centering
    \begin{tikzpicture}
        \node[style=rv](x){$\rv x$};
        \node[style=rv](z)[right=of x]{$\rv z$};
        \node[style=rv](t)[above=of z]{$\rv t$};
        \node[style=rv](y)[right=of z]{$\rv y$};

        \draw[->] (x) -- (t);
        \draw[->] (t) -- (y);
        \draw[->] (x) -- (z);
        \draw[->] (z) -- (y);
    \end{tikzpicture}
    \caption{An extended graphical model where labels $\rv y$ are generated from $\rv x$ through a random variable $\rv z$.}
    \label{fig:graphical_model_with_z}
\end{figure}

\subsection{(Recipe 1) ``Generative'' Foundation Models: Model $\rv z | \rv x$, and Decode Labels Explicitly}
In Recipe 1, which we will call our ``generative foundation model recipe,''\footnote{Note that the model here is actually learning a \emph{discriminative} model over $\rv z$ in the classical sense, not a generative model, but we still call this recipe the ``generative'' recipe as this is the vehicle by which most ``generative'' foundation models today operate.} we do the following:
\begin{itemize}
    \item Define the universal task $u$ to be predicting $\rv z \mid \rv x$, with $\UX = \X$. 
    \item Let $\proc{E}(x, \cdot) = x$ simply forward the first argument and discard the task.
    \item Let $f$ be any model trained on samples of $(x, z)$ to approximate $u$.
    \item Let the decoder representation be complete, with the canonical choice
    $\rho_D(t)=\delta_t$, so that $r_D=t$ almost surely.
    \item Let the labeler be the known labeling function, $d(t, z) := g(t, z)$.
    The target decoder is then $\Dstar(q, t) = \pushforward{g(t, \cdot)}{q}$.
    \item Let the decoder $\proc{D}$ be any procedure to approximate $\Dstar$ (e.g., Monte Carlo: draw $z \sim q$ and output $g(t, z)$, up to some budget).
\end{itemize}

The autoregressive EHR foundation models discussed in Section~\ref{subsec:reduced_def} illustrate this construction, with future patient trajectories serving as the universal-task output.

Then, as is shown in Theorem~\ref{thm:recipe_1_properties}, the resulting foundation model given by $(\rho, \proc{E}, f, \proc{D})$ is well-founded in the sense of Definition~\ref{def:well-founded}, which means that the only remaining sources of error will be the excess terms, which capture error introduced by $f$'s poor approximation of $u$, the Decoder-labeler Error $\DErrU$, capturing $\proc D$'s poor approximation of $\Dstar$, and the Bregman interaction term.

\begin{theorem}
    \label{thm:recipe_1_properties}
    If a foundation model is built as in Recipe 1, then this model is well-founded.
\end{theorem}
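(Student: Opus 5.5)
We must show $\uDef = 0$ and $\dErrU = 0$ (Definition~\ref{def:well-founded}). The plan is to show directly that, under Recipe~1, the target-decoded variable $\rv \duy$ has exactly the same conditional law given $(\rv x, \rv r)$ as the true label $\rv y$. Both claims then follow from nonnegativity of the error terms in Definition~\ref{def:error_terms}.

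First, unpack the construction. Because $\proc E(x,\cdot)=x$, we have $\rv \ux = \rv x$, and $u$ is the law of $\gv{\rv z}{\rv x}$. Hence $\gv{\rv \uy}{\rv \ux = x}$ has law $\p{\gv{\rv z}{\rv x = x}}$. Since $\rho_D(t)=\delta_t$ is complete, $\rv \rD = \rv t$ almost surely, so $\rho$ is complete, and conditioning on $(\rv x,\rv r)$ is a.s. the same as conditioning on $(\rv x,\rv t)$ (with $\rv r_E$ independent given $\rv t$). By definition, $\gv{\rv \duy}{\rv x = x, \rv r = r}$ has law $\Dstar(u(x), t) = \pushforward{g(t,\cdot)}{\p{\gv{\rv z}{\rv x = x}}}$. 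By the second recipe assumption, together with $\rv z \perp \rv t \mid \rv x$ from Figure~\ref{fig:graphical_model_with_z}, this equals $\eta(x,t)$. It also equals $\phi_\rho(x,r)$, because $\rv r$ carries no information about $\rv y$ beyond $\rv t$ (the definitional assumption $\rv r \perp \rv x,\rv y \mid \rv t$). Therefore $\condDivergence{\ell}{\rv y}{\rv \duy}{\rv x, \rv r} = 0$.

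Next, derive the two vanishing terms. The deterministic labeling kernel $\lambda(r_D, z) = \delta_{d(r_D,z)}$ is one admissible kernel in the deficiency infimum defining $\uDef$, and the divergence it attains is exactly $\condDivergence{\ell}{\rv y}{\rv \duy}{\rv x, \rv r}=0$. Since deficiency is nonnegative, $\uDef = 0$. Then $\dErrU = \condDivergence{\ell}{\rv y}{\rv \duy}{\rv x, \rv r} - \uDef = 0 - 0 = 0$.

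The main obstacle is the measure-theoretic identification in the first step. We must verify that the conditional law of $\rv y$ given $(\rv x, \rv r)$ really coincides with $\pushforward{g(\rv t,\cdot)}{\p{\gv{\rv z}{\rv x}}}$. This requires that $\rv z$, which is coupled to $\rv y$ through $g$, is conditionally independent of $\rv t$ and $\rv r$ given $\rv x$, and that the universal-task variable $\rv \uy$ may be identified in law with $\rv z$. I would handle it by disintegrating the joint law of $(\rv x,\rv t,\rv z)$ as $p(\rv x)\,p(\rv t\mid \rv x)\,p(\rv z\mid\rv x)$, using the DAG of Figure~\ref{fig:graphical_model_with_z}. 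Pushing forward through $g$ then yields the claim, and completeness of $\rho$ discharges the $\rv r$ versus $\rv t$ distinction almost surely.
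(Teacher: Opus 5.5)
Your proposal is correct and follows essentially the same route as the paper. Both arguments show $\Dstar(u(x),t)=\pushforward{g(t,\cdot)}{\p{\gv{\rv z}{\rv x = x}}}=\eta(x,t)$ using $\rv y = g(\rv t,\rv z)$ and $\rv z\perp\rv t\mid\rv x$, use $\rv \rD=\rv t$ a.s. to get $\condDivergence{\ell}{\rv y}{\rv \duy}{\rv x,\rv r}=0$, and then conclude $\uDef=0$ (the labeler kernel is feasible in the deficiency) and hence $\dErrU=0$; your explicit identification $\phi_\rho(x,r)=\eta(x,t)$ via $\rv r\perp\rv x,\rv y\mid\rv t$ just spells out a step the paper leaves implicit.
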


\begin{proof}
    By Assumption~2 and $\rv z \perp \rv t \mid \rv x$, with probability one under the joint law of $(x, t)$,
    \[
        \Dstar(u(x), t)
        = \pushforward{g(t, \cdot)}{\p{\gv{\rv z}{\rv x = x}}}
        = \pushforward{g(t, \cdot)}{\p{\gv{\rv z}{\rv x = x, \rv t = t}}}
        = \eta(x, t).
    \]
    As $\rv \rD = \rv t$ almost surely, $\gv{\rv \duy}{\rv x, \rv r}$ and $\gv{\rv y}{\rv x, \rv r}$ are identically distributed, so $\condDivergence{\ell}{\rv y}{\rv \duy}{\rv x, \rv r} = 0$. Since $\rv \duy$ is feasible in the deficiency defining $\uDef$ and deficiencies are non-negative, $\uDef = 0$, and therefore $\dErrU = 0$, establishing well-foundedness.
\end{proof}

\paragraph{Advantages to Recipe 1}
This recipe has multiple advantages. Firstly, it is guaranteed to be well-founded, and thus if $f=u$ and $\proc D = \proc D^\star$, the resulting procedure achieves Bayes-optimal predictions under the stated assumptions. In addition, if $\rv x, \rv z$ are sampleable, this model requires no additional assumptions to learn. Further, note that this model is universal in the sense that \emph{any} task with a label inferrable from $\rv z$ can be inferred from this foundation model -- not just those tasks in $\T$, which will differ from the capabilities of models in Recipe 2. 


Because Recipe 1 models the full distribution of $\gv{\rv z}{\rv x}$, it can also answer \emph{conditional} queries without retraining, by rejection sampling at inference time. For instance, given tasks $t, t_1, \ldots, t_k \in \T$ and labels $y_1, \ldots, y_k \in \Y$, let $E := \{z : g(t_i, z) = y_i,\ i = 1, \ldots, k\} \subseteq \UY$. Drawing $z \sim f(x)$, discarding draws outside $E$, and labeling the accepted draws with $g(t, \cdot)$ yields $\pushforward{g(t,\cdot)}{f(x)(\cdot \mid E)}$ whenever $f(x)(E) > 0$, an approximation of the conditional label distribution $\pushforward{g(t,\cdot)}{u(x)(\cdot \mid E)}$ of task $t$ given that tasks $t_1, \ldots, t_k$ take labels $y_1, \ldots, y_k$. Recipe 2 offers no analogous mechanism unless such conditional tasks are included in $\T$ during training. A formal treatment of conditional tasks, including how the acceptance probability $u(x)(E)$ affects the error bounds of Section~\ref{sec:FM_error_sources}, is left to future work.

\paragraph{Detriments to Recipe 1}
There are also several key disadvantages to this recipe. In particular, while this property of universality is nice, it also may induce needless inefficiency. If the only tasks of interest are those in $\T$, then any additional learning effort spent on how to transform from $\rv x$ to $\rv z$ in a manner that does not benefit learning $\rv y | \rv t, \rv x$ is wasted.

In addition, while well-foundedness is nice, it is a relatively weak property. Without $d$-optimality, which these models may not have, we cannot even relate total error to underlying model error in a principled manner, meaning that there may be interactions that exacerbate or ameliorate error between $f$ and $\proc D$ that we cannot predict in advance with this theory. More generally, the lack of $d$-optimality and the requirement that $g$ be used at inference time to translate from $\rv z$ to $\rv y$ impose additional sources of error and computational costs to the user, which can be severe. 

\paragraph{Non well-founded Generative Foundation Models}
Recipe 1 is constructed such that it is well-founded, by virtue of the assumed Bayesian DAG shown in Figure~\ref{fig:graphical_model_with_z}. We could, however, generalize our notion of this recipe to apply in settings even when there is an unobserved, ``confounder'' pathway to $\rv y$ from $\rv x$ independent of $\rv z$, thereby violating the assumption that $\rv y \perp \rv x | \rv z, \rv t$. In such a setting, we could still build a model as in Recipe 1 -- let $\proc E(x, t) = x$, set $u$ to be our task of predicting $\rv z | \rv x$, set $\rhoD(t) = \delta_t$, and use $g$ as our labeler, with $\proc D$ approximating the induced target decoder. This model would still be a valid foundation model under Definition~\ref{def:foundation_model}, but it would no longer be guaranteed to be well-founded, due precisely to the error induced by ignoring the pathway $\rv x \to \rv y$ independent of $\rv z$. We will use the term ``generative foundation models'' in general to refer to any model that satisfies this form -- in particular have $\proc E(x, t) = x$, which permits an interpretation of the model in this way, and reserve \emph{well-founded generative foundation model} towards one that satisfies Recipe 1 as described fully (and is therefore well-founded).

\subsection{(Recipe 2) ``Discriminative'' Foundation Models --- Sample over a proxy distribution $q_t \in \ProbDists{\T}$, then learn $g(\rv t', \rv z) | \rv x, \rv t'$ for $\rv t' \sim q_t$}


In this recipe, which we will call the ``task-conditioned'' recipe,
we assume that we also have access to a proxy distribution
$q_t \in \ProbDists{\T}$, from which we sample $\rv t'$
independently of $(\rv x,\rv z)$. We assume that, with probability
one over the input $\rv x$, the conditional distribution of
$\rv t$ given $\rv x$ is absolutely continuous with respect to
$q_t$. In other words, for such inputs, any measurable set of
tasks assigned zero probability by $q_t$ also has zero probability
under the original conditional task distribution.

In this setting, we can sample $(x, z, t')$ from $\rv x, \rv z, \rv t'$, then compute $y' = g(t', z)$; we write $\rv y' := g(\rv t', \rv z)$ for the induced label random variable. To construct a foundation model, we then

\begin{itemize}
    \item Define the universal task $u$ to be predicting
    $\rv y' \mid \rv x, \rv t'$, with
    $\UX = \X \times \T$ and $\UY = \Ybar$.
    \item Let the encoder representation $\rho_E$ be complete (and canonically $\rho_E(\rv t) = \delta_{\rv t}$), and $E = I$ be the identity.
    \item Let $f$ be any model trained on samples $(x, t', y')$ to approximate $u$.
    \item Let the decoder representation be degenerate, the labeler be the identity, $d_{\rD}(y) = y$, and let the realized decoder simply return the output of $f$ directly: $\proc{D}(q, \cdot) = q$.
\end{itemize}

Task-conditioned models such as EveryQuery illustrate this construction, using structured task queries to directly predict downstream labels, subject to the stated assumptions on the proxy task distribution.

Then, as we will show in Theorem~\ref{thm:recipe_2_properties}, this construction yields a foundation model that is not only well-founded but also $d$-optimal, permitting us to reduce all error to model error through Theorem~\ref{thm:bounded_model_error}.

\begin{theorem}
    \label{thm:recipe_2_properties}
    If a foundation model is built as in Recipe 2, then this model will be both well-founded and $d$-optimal.
\end{theorem}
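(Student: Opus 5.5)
The plan is to prove $d$-optimality first, since it is immediate from the construction, and then to show well-foundedness by proving that the target decoder applied to the true universal task reproduces $\eta$.

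\emph{$d$-optimality.} In Recipe 2 the labeler is the identity, $d_{\rD}(y)=y$. So for every $q\in\ProbDists{\Ybar}$ and every $\rD$,
\[
\Dstar(q,\rD)=\pushforward{\mathrm{id}}{q}=q=\proc D(q,\rD).
\]
Hence $\proc D=\Dstar$ identically, which is $d$-optimality in the sense of Definition~\ref{def:consistent_fm}.

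\emph{Key identification: $u(x,t)=\eta(x,t)$ for almost every $(x,t)$ under the law of $(\rv x,\rv t)$.} This is the step I expect to be the main obstacle. It proceeds in three parts.
\begin{enumerate}
\item Since $\rv t'$ is drawn from $q_t$ independently of $(\rv x,\rv z)$, we have $\rv z\perp\rv t'\mid\rv x$. Because $\rv y'=g(\rv t',\rv z)$, a version of $u$ is therefore
\[
u(x,t)=\pushforward{g(t,\cdot)}{\p{\gv{\rv z}{\rv x=x}}}
\]
for $(\p{\rv x}\otimes q_t)$-almost every $(x,t)$.
\item By Assumption~2 and $\rv z\perp\rv t\mid\rv x$ (Figure~\ref{fig:graphical_model_with_z}), the right-hand side equals $\eta(x,t)$ for almost every $(x,t)$ under the true joint law.
\item The delicate point is that $u$ is only pinned down up to $\p{\rv x}\otimes q_t$-null sets, while the claim must hold almost surely under $\p{\rv x,\rv t}$. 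The absolute continuity assumption ($\p{\rv t\mid\rv x}\ll q_t$ for almost every $x$) gives $\p{\rv x,\rv t}\ll\p{\rv x}\otimes q_t$ by disintegration. So the exceptional set on which the chosen version of $u$ differs from the pushforward formula is also null under the true law, and the equality $u=\eta$ transfers.
\end{enumerate}

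\emph{Well-foundedness.} Since $\proc E$ is the identity and $\rho_E$ is complete, $\rv\ux=(\rv x,\rv t)$ almost surely. Then
\[
\rv\duy\sim\Dstar(u(\rv x,\rv t),\rv\rD)=u(\rv x,\rv t)=\eta(\rv x,\rv t).
\]
Conditioning on $(\rv x,\rv r)$, where $\rv r$ determines $\rv t$ through the complete encoder component, the conditional laws of $\rv\duy$ and $\rv y$ agree, so $\condDivergence{\ell}{\rv y}{\rv\duy}{\rv x,\rv r}=0$. Exactly as in the proof of Theorem~\ref{thm:recipe_1_properties}, $\rv\duy$ is obtained from $(\rv\uy,\rv\rD)$ by a (deterministic) labeling kernel, so it is feasible in the deficiency defining $\uDef$. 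Since deficiencies are non-negative, this gives $\uDef=0$, and then $\dErrU=0-0=0$. This is well-foundedness w.r.t.\ both $u$ and $d$ (Definition~\ref{def:well-founded}).
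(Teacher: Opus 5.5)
Your proposal is correct and matches the paper's proof essentially step for step. The identity labeler gives $\Dstar = \proc D$. The independence of $\rv t'$ from $(\rv x, \rv z)$, the labeling assumption, $\rv z \perp \rv t \mid \rv x$, and absolute continuity together give $u(x,t) = \pushforward{g(t,\cdot)}{\p{\gv{\rv z}{\rv x = x}}} = \eta(x,t)$ almost surely under the law of $(\rv x, \rv t)$. Feasibility of $\rv\duy$ in the deficiency then yields $\uDef = \dErrU = 0$. Your only addition is making the transfer of the almost-sure equality explicit via $\p{\rv x, \rv t} \ll \p{\rv x} \otimes q_t$ (by disintegration), which the paper invokes without spelling out.
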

\begin{proof}
    $d$-optimality is clear, as
    $\Dstar(q, \rD) = \pushforward{\mathrm{id}}{q}
    = q = \proc D(q, \rD)$.

    Since $\rv t'$ is independent of $(\rv x,\rv z)$ and
    $\rv y'=g(\rv t',\rv z)$, the definition of $u$ gives
    \[
        u(x,t)
        = \pushforward{g(t,\cdot)}
          {\p{\gv{\rv z}{\rv x=x}}}
    \]
    with probability one when $(x,t)$ is sampled from the joint
    law of $(\rv x,\rv t')$.
    
    By the absolute continuity assumption, this equality also
    holds with probability one when $(x,t)$ is sampled from
    the original joint law of $(\rv x,\rv t)$.

    Since, by the labeling assumption, $\rv y=g(\rv t,\rv z)$ almost surely, and
    $\rv z \perp \rv t \mid \rv x$, the right-hand side
    equals $\eta(x,t)$ under that same joint law.
    Thus, $u(x,t)=\eta(x,t)$ almost surely.

    Since the task representation is complete and the encoder
    and labeler are identities,
    $\condDivergence{\ell}{\rv y}{\rv \duy}{\rv x,\rv r}=0$.
    As $\rv \duy$ is feasible in the deficiency defining
    $\uDef$ and deficiencies are non-negative, $\uDef=0$,
    and therefore $\dErrU=0$, establishing well-foundedness.
\end{proof}
\paragraph{Advantages of Recipe 2}
Unlike Recipe 1, Recipe 2 is both well-founded and $d$-optimal, meaning we can bound total error solely via model error, via Theorem~\ref{thm:bounded_model_error}. This means that we can rely on the extensive theory and practice of ordinary machine learning to learn foundation models via this recipe without concern for unmeasurable interactions between different error terms. It also has the significant benefit that there is no additional computational burden on users at inference time beyond the model $f$ itself, as $\proc D$ is trivial. It also does not waste learning energy on parts of $\rv z | \rv x$ that do not matter for any task -- though this efficiency gain may be counteracted by the distribution shift of $q_t$ discussed more below.

\paragraph{Disadvantages of Recipe 2}
The main detriment to this recipe is that this induces a source of distribution shift between training and testing, in that $\rv t' \neq \rv t$. 
At optimality under the sampling distribution, the absolute
continuity assumption transfers the almost-sure equality $f=u$
to the original joint law of $(\rv x,\rv t)$, where $u=\eta$
almost surely by the proof of
Theorem~\ref{thm:recipe_2_properties},
but in a real-world setting, this means that we cannot actually measure $\fModelErr$ in an unbiased manner without the ability to sample from $\rv t$ directly, which we assume is impossible. As such, though we can bound total error in theory, in practice we may still be left with uncertainty around what the true inference time error will be in practice, though we may be able to rely on the existing literature around distribution shift and limited ability to sample from $\rv t$ to ameliorate this in practice, which we leave as an area of future work. In addition, this recipe is only universal to $\T$ -- not to $\rv z$ as Recipe 1 is. 

Lastly, this recipe also has a source of learning inefficiency similar to Recipe 1 -- but instead of learning parts of $\rv z | \rv x$ that do not matter for any tasks, Recipe~2 may introduce learning inefficiency by requiring the model
to learn labels for proxy pairs $(x,t')$ that have very low probability
under the original joint distribution of $(\rv x,\rv t)$, since
$\rv t'$ is sampled independently of $\rv x$.

\paragraph{General Task-conditioned foundation models.}
Much like how we generalize our notion of a generative model to include any model with an unused encoder representation, we can generalize our notion of a task-conditioned model to non-well-founded settings by calling any foundation model with an unused \emph{decoder} representation a ``task-conditioned'' foundation model.

\subsection{Examples}
\label{sec:examples}

Table~\ref{tab:example_fm_decompositions} illustrates how the universal-task reduction applies to several representative foundation-model designs introduced earlier. For each example, the table gives one natural decomposition into encoder-side task representation $\rho_E$, encoder $\proc E$, universal-task input space $\UX$, predictive model $f$, universal-task output space $\UY$, decoder-side task representation $\rho_D$, and decoder $\proc D$. These decompositions are not unique: in particular, preprocessing steps such as serialization or tokenization, as well as architectural encoder or decoder components, may be absorbed into $\proc E$ or $\proc D$ depending on the level of abstraction chosen. The examples are therefore intended to illustrate how the framework can represent common foundation-model designs rather than to prescribe a unique decomposition for each architecture.

The designs in Table~\ref{tab:example_fm_decompositions} are also not intended to exhaust the possible realizations of foundation models under our definition. Prompted language models provide one broad family of examples whenever natural-language task specifications support any-task inference over the domain of interest. At the other extreme, recent decision models such as JEV expose a more explicitly task-conditioned interface: a model is provided with an input state together with one or more structured, typed questions and returns probability distributions over their possible answers. JEV is additionally trained using reinforcement learning for calibrated decisions (RLCD), which explicitly optimizes the quality and calibration of these probabilistic decisions \citep{jev}. From the perspective of our framework, such systems are naturally interpreted as instances of, or closely related to, the task-conditioned construction of Recipe~2. These emerging designs further illustrate that foundation-model behavior need not be tied to autoregressive text generation, but can instead arise from models trained to directly answer a sufficiently expressive space of task specifications.

\begin{table*}[!t]
    \centering
    \scriptsize
    \setlength{\tabcolsep}{3pt}
    \renewcommand{\arraystretch}{1.35}

    \begin{tabularx}{\textwidth}{
        @{}
        >{\raggedright\arraybackslash}p{0.14\textwidth}
        >{\raggedright\arraybackslash}X
        >{\raggedright\arraybackslash}X
        >{\raggedright\arraybackslash}X
        >{\raggedright\arraybackslash}X
        >{\raggedright\arraybackslash}X
        @{}
    }

    \toprule
    &
    \textbf{Clinical LLM}
    &
    \textbf{EHR AR}
    &
    \textbf{PFN}
    &
    \textbf{EQ}
    &
    \textbf{Multitask}
    \\
    \midrule

    $\rho_E$\newline
    {\tiny Encoder task repr.}
    &
    Natural-language prompt / instruction
    &
    $\varnothing$
    &
    Context set $\{(x_i,y_i)\}_{i=1}^n$
    &
    Structured program / query
    &
    Categorical task index
    \\

    $\proc E$\newline
    {\tiny Encoder}
    &
    Pair EHR input $x$ with prompt / instruction
    &
    Forward EHR input $x$
    &
    Pair query $x$ with context set
    &
    Pair EHR input $x$ with structured query
    &
    Pair EHR input $x$ with task index
    \\

    $\UX$ / $v$\newline
    {\tiny Universal-task input}
    &
    $\X\times\RR_E$
    &
    $\X$
    &
    $\X\times\RR_E$
    &
    $\X\times\RR_E$
    &
    $\X\times\RR_E$
    \\

    \midrule

    Training data\newline
    {\tiny Typical pretraining data}
    &
    Clinical text / EHR notes
    &
    Longitudinal structured EHR event sequences
    &
    Task episodes sampled from a prior: context set plus query $x$
    &
    EHR examples paired with structured tasks and labels
    &
    EHR examples with labels for a fixed finite task set
    \\

    $f$\newline
    {\tiny Model}
    &
    Autoregressive clinical language model
    &
    Autoregressive patient-trajectory model
    &
    PFN predictor
    &
    Task-conditioned EHR predictor
    &
    Shared multitask predictor
    \\

    $\UY$ / $z$\newline
    {\tiny Universal-task output}
    &
    Generated clinical text / answer sequence
    &
    Generated EHR trajectory
    &
    Prediction in $\Y$
    &
    Prediction in $\Y$
    &
    Prediction in $\Y$
    \\

    $\rho_D$\newline
    {\tiny Decoder task repr.}
    &
    $\varnothing$, or output schema / format
    &
    Structured task program
    &
    $\varnothing$
    &
    $\varnothing$
    &
    $\varnothing$
    \\

    $\proc D$\newline
    {\tiny Decoder}
    &
    Identity, or parse generated text into task output
    &
    Apply task program to generated trajectory distribution
    &
    Identity
    &
    Identity
    &
    Identity
    \\

    $\T$\newline
    {\tiny Task space}
    &
    Natural-language clinical tasks / instructions
    &
    EHR prediction / extraction tasks
    &
    Supervised prediction tasks / data-generating distributions
    &
    Structured EHR queries / programs
    &
    Finite discrete task set
    \\

    \midrule

    Example papers\newline
    {\tiny Representative works}
    &
    \citep{peng2023gatortrongpt}
    &
    \citep{ETHOS}
    &
    \citep{tabpfn}
    &
    \citep{everyquery}
    &
    \citep{motor}
    \\

    Recipe\newline
    {\tiny Closest Sec.~\ref{sec:FM_recipes} construction}
    &
    Hybrid: Recipe~2-like input,
    Recipe~1-like output
    &
    Recipe~1 (Generative)
    &
    Recipe~2-like
    &
    Recipe~2 (Task-conditioned)
    &
    Recipe~2-like
    \\

    \bottomrule
    \end{tabularx}

    \caption{
        Illustrative decompositions of several foundation-model paradigms
        under the universal-task reduction framework. Here, $x\in\X$
        denotes the original input, $y\in\Y$ the downstream label,
        $r_E\in\RR_E$ and $r_D\in\RR_D$ the encoder- and decoder-side
        task representations, $v\in\UX$ the universal-task input, and
        $z\in\UY$ the universal-task output. We use a minimal functional
        decomposition in which $\proc E$ only assembles the original input
        and encoder-side task representation, while $\proc D$ only performs
        the task-specific transformation needed to map universal-task
        predictions into the downstream label space. Preprocessing,
        tokenization, embeddings, and architectural encoder or decoder
        components are therefore absorbed into $f$. The ``Training data''
        and ``Example papers'' rows are included for comparison and are not
        themselves components of the formal reduction. These decompositions
        are not unique. The ``Recipe'' row indicates the closest structural
        correspondence to the constructions in Section~\ref{sec:FM_recipes} rather than an
        exact identification with a particular recipe. In particular,
        clinical LLMs are shown as hybrid constructions: task information
        enters on the encoder side through natural-language prompting, as
        in Recipe~2, while the model generates a general text output that
        may subsequently be mapped into the downstream label space, as in
        Recipe~1.
    }

    \label{tab:example_fm_decompositions}
\end{table*}

\section{Foundation Model Evaluation is Mathematically Similar to Evaluation under Algorithmic Fairness}
\label{sec:FM_evaluation}

Given a foundation model $\proc{P}$, we may also ask how we can evaluate $\proc{P}$ in a useful manner, given we care about all tasks in $\T$ in accordance with $\rv t$. In fact, this evaluation shows a striking mathematical similarity to evaluation of supervised learners in settings with constraints on algorithmic fairness (Theorem~\ref{thm:fairness_analogy}). The motivation for this theorem is, informally, that many fairness results can be seen as general statements about how a model behaves over a partitioned input domain, where the sensitive attribute defines the partition, and we can realize that in evaluation of foundation models, we likewise care about a partitioned domain, where tasks determine the partition.

\begin{theorem}
    \label{thm:fairness_analogy}
    Let $\proc{P}$ be a foundation model over extended domain $\X, \T, \Ybar$ as defined throughout.
    
    Additionally, suppose we have a distinct domain $\X', \Ybar'$, such that a model $f': \X' \to \ProbDists{\Ybar'}$ is a model of some task on that domain, with induced score $\rv s' := f'(\rv x')$. In this domain, suppose $A$ defines a set of ``attributes,'' such that for each sample $x', y'$, we can identify $a$ dictating the value of the ``attribute'' for this sample $x'$, realized through the random variables $\rv x', \rv y', \rv a$.

    Suppose one can make a logical statement $S$ or formal theoretical proof about properties of $f'$ and its evaluation as dictated across the full partition of possible values $a \in A$ -- e.g., splitting the dataset into separate partitions, each defined by the observation of a given value $a$ for $\rv a$. Then one can make an analogous statement $S$ about $\proc{P}$ over $\X, \Ybar$, such that the partitions are induced by $\T$ instead of $A$, provided that any required assumptions for said logical statement also hold over the $\X, \T, \Y$ domain with $\T$ standing in for $A$.
\end{theorem}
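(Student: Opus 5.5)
The plan is to prove Theorem~\ref{thm:fairness_analogy} by an explicit structural isomorphism (a change of variables) between the two evaluation settings, rather than by re-deriving any particular fairness result. Concretely, I will construct, from the foundation model setting, an instance of the fairness setting: take $\X' := \X \times \T$, $\Ybar' := \Ybar$, the attribute space $A := \T$, and jointly distributed random variables $\rv x' := (\rv x, \rv t)$, $\rv y' := \rv y$, $\rv a := \rv t$. The model is $f' := \proc P$, viewed as a map $\X' \to \ProbDists{\Ybar'}$; in the single-draw case it is $\proc P$ on $\X \times \RR$ composed with the draw of $\rv r$ given $\rv t$, with the representation randomness folded into $\rv x'$. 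The score is then $\rv s' = \proc P(\rv x, \rv r)$. The attribute is a deterministic, measurable function of the input, namely the projection onto $\T$. This is exactly the standard fairness setup in which the sensitive attribute is observed alongside, or as part of, the features.

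The key steps, in order, are as follows. First, I will verify that this construction is a legitimate instance of the fairness domain. $\X \times \T$ is standard Borel because both factors are, so the joint law of $(\rv x', \rv y', \rv a)$ is well defined by the standing assumptions. Conditional laws given $\rv a = t$ exist by disintegration; in particular $\gv{\rv y'}{\rv x', \rv a = t} \sim \eta(\cdot \mid x, t)$. Second, I will show that the partition of the fairness domain by $\rv a = a$ is carried to the partition of the foundation model domain by $\rv t = t$. Every per-partition quantity then coincides under this identification: per-group calibration, error rates, risk, and the joint law of $(\rv s', \rv y')$ given $\rv a$. For example, $\EV{}{\ell(\rv s', \rv y') \mid \rv a = t}$ equals the per-task $\ell$-error of $\proc P$ on task $t$. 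Likewise, aggregates over the partition, such as expectations over $\rv a$, worst-group quantities, and between-group comparisons, become expectations over $\rv t$ and worst-task or between-task comparisons. Third, I will argue that any statement $S$, together with its proof, is expressed solely in terms of this joint law and the partition. Transporting it through the measure-preserving bijection therefore yields the analogous statement about $\proc P$, provided its hypotheses are satisfied; the theorem assumes exactly this. As a sanity illustration, I can instantiate the result with the calibration versus equalized-odds impossibility, reading it as the fact that $\proc P$ cannot in general be simultaneously calibrated per task and have equal error rates across tasks with differing base rates.

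The main obstacle is the third step, because ``any logical statement $S$'' is informal. To make it rigorous, I will restrict $S$ to statements in the language of the joint distribution of $(\rv x', \rv y', \rv s', \rv a)$, that is, statements invariant under measurable isomorphisms of the underlying probability space. Transfer then follows because the identification map is such an isomorphism. A secondary subtlety is that fairness results often assume a finite $A$ or groups of positive probability, whereas $\T$ may be continuous. I will handle this through the proviso in the theorem: the required assumptions, for example finitely many tasks with positive mass, or a disintegration in place of conditioning on null events, must hold for $\rv t$. I would flag explicitly that the conclusion is conditional on that proviso rather than attempt to remove it.
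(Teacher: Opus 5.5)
Your proposal is correct and follows essentially the same route as the paper. The paper substitutes $\rv t$ for $\rv a$, $\rv y$ for $\rv y'$, and the foundation-model score for $\rv s'$, then notes that any statement phrased in the joint law of score, label, and partitioning variable carries over, under the same proviso on assumptions. Your explicit instantiation $\X' = \X \times \T$ (with $\rv r$ folded into the input in the single-draw case) and your restriction of $S$ to isomorphism-invariant statements about that joint law simply make the paper's informal relabeling argument more precise.
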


\begin{proof}
Such a statement is expressed entirely in terms of the joint distribution of
the prediction, label, and partitioning variable. Replacing the attribute
$\rv a$ by the task $\rv t$, the label $\rv y'$ by $\rv y$, and the score
$\rv s'$ by the foundation-model score $\rv s$ preserves the form of all
probabilities, conditional independences, and group-conditioned quantities, provided that any required assumptions for the theorem are likewise met over this space (as is an assumption in the theorem).
Hence the same argument applies with tasks indexing the partitions instead of
attributes.
\end{proof}

While basic, the utility of Theorem~\ref{thm:fairness_analogy} is that many results stemming from the literature on algorithmic fairness can now be seen to have an analogous result for general foundation models, over ``subpopulations'' defined by disparate tasks, rather than protected attributes. For example, Lemma~\ref{lemma:impossibility_thm_fms} establishes an analog of the Impossibility Theorem in fairness for foundation models:

\begin{lemma}
    \label{lemma:impossibility_thm_fms}
    Suppose for simplicity that $\Y = \{0, 1\}$. Then, let $\rv s$ be the random variable corresponding to the induced output of a foundation model $\proc{P}(\rv x, \rv r)$ (or $\proc{P}(\rv x, \rv t)$ when $\rho$ is complete), realized as a distribution-valued random variable jointly distributed with all other relevant variables, with $\rv s$ being treated as analogous to a ``score'' of the output of $\proc{P}$. In analogy to algorithmic fairness, define \emph{sufficiency} to mean that $\rv y \perp \rv t | \rv s$ and \emph{separation} to mean that $\rv s \perp \rv t | \rv y$.

    Then, provided $\rv y \not\perp \rv t$ and $\proc P$ is not a perfect predictor ($H(\rv y | \rv s) > 0$), then $\proc P$ cannot simultaneously satisfy sufficiency and separation.
    

\end{lemma}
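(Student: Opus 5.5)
The plan is to argue directly on the joint law of $(\rv s, \rv y, \rv t)$, mirroring the classical fairness argument; Theorem~\ref{thm:fairness_analogy} licenses exactly this transfer. I will not assume full positivity of all events (which the standard intersection-property proof needs), but only $\rv y \not\perp \rv t$ and $H(\rv y \mid \rv s) > 0$. So I will use a likelihood-ratio argument specific to binary $\Y$. Suppose, for contradiction, that both sufficiency and separation hold.

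\textbf{Step 1: set up densities.} Fix a common dominating measure $\mu$ on the score space; for instance $\mu = P_{\rv s}$ works, since $P_{\rv s \mid \rv y = y} \ll P_{\rv s}$ whenever $P(\rv y = y) > 0$. Let $p_y(s)$ denote the density of $\rv s \mid \rv y = y$ for $y \in \{0,1\}$. Let $\pi_t := P(\rv y = 1 \mid \rv t = t)$ and $m(s) := P(\rv y = 1 \mid \rv s = s)$.

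\textbf{Step 2: combine the two conditions.} Separation says $\rv s \mid (\rv y, \rv t) \sim \rv s \mid \rv y$, so for a.e.\ $t$ the density of $\rv s$ given $\rv t = t$ is the mixture $\pi_t p_1(s) + (1-\pi_t) p_0(s)$. Bayes' rule then gives, for a.e.\ $t$ and a.e.\ $s$ in the support of this mixture,
\[
P(\rv y = 1 \mid \rv s = s, \rv t = t) = \frac{\pi_t p_1(s)}{\pi_t p_1(s) + (1-\pi_t) p_0(s)}.
\]
Sufficiency says this quantity equals $m(s)$.

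\textbf{Step 3: invert.} On the set $G := \{s : 0 < m(s) < 1\}$ I expect both $p_0(s)$ and $p_1(s)$ to be strictly positive. Rearranging the display then yields
\[
\frac{\pi_t}{1-\pi_t} = \frac{m(s)}{1-m(s)} \cdot \frac{p_0(s)}{p_1(s)},
\]
and the right-hand side does not depend on $t$. The hypothesis $H(\rv y \mid \rv s) > 0$ gives exactly $P(\rv s \in G) > 0$. So if, for $P_{\rv t}$-a.e.\ $t$, some $s \in G$ lies in the support of $\rv s \mid \rv t = t$, then the odds of $\pi_t$ equal one fixed constant. Hence $\pi_t$ is a.s.\ constant, i.e.\ $\rv y \perp \rv t$, contradicting the hypothesis.

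\textbf{Main obstacle: the measure-theoretic bookkeeping in Step 3.} The score $\rv s$ is distribution-valued and possibly continuous, so ``pick an $s$'' must be replaced by ``a.e.\ $s$'' statements. I must also rule out the degenerate case where, for a positive-measure set of $t$, the conditional law of $\rv s$ puts no mass on $G$. I would handle this in two parts.
\begin{enumerate}
\item If $\pi_t \in (0,1)$, the mixture density is positive wherever $p_0 p_1 > 0$. So $G$ carries positive mass under $\rv s \mid \rv t = t$, provided $G \subseteq \{p_0 p_1 > 0\}$ up to null sets. That inclusion follows because $m = \pi p_1 / (\pi p_1 + (1-\pi) p_0)$ with $\pi = P(\rv y = 1)$.
\item Tasks with $\pi_t \in \{0,1\}$ are forced, via the same identity on $G$, to have $m \in \{0,1\}$ on their score support. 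This contradicts those tasks contributing to $G$.
\end{enumerate}
Checking this case split carefully is the only delicate part. If it proves unwieldy, I would fall back to the positivity assumption and use the graphoid route instead: sufficiency together with separation gives $\rv t \perp (\rv s, \rv y)$ by the intersection property, which implies $\rv t \perp \rv y$.
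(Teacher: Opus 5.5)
Your proof is correct, and it takes a genuinely different route from the paper. The paper's proof is a one-line transfer. It invokes Theorem~\ref{thm:fairness_analogy} to import the classical separation--sufficiency impossibility result from the fairness literature, with $\rv t$ in place of the protected attribute, and re-derives nothing. You instead argue directly on the joint law of $(\rv s, \rv y, \rv t)$. Separation forces each task's score law to be the mixture $\pi_t P_1 + (1-\pi_t) P_0$, and sufficiency on $G = \{0 < m < 1\}$ then pins down the odds of $\pi_t$.

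Your route uses exactly the lemma's stated hypotheses (binary $\Y$, $\rv y \not\perp \rv t$, $H(\rv y \mid \rv s) > 0$), and it handles continuous, distribution-valued scores. The familiar graphoid proof of the classical result assumes every event has positive probability. That assumption is not among the lemma's hypotheses and need not hold for a score of the form $\proc P(\rv x, \rv r)$. So your argument confirms the conclusion under precisely the stated conditions, which the citation-based proof leaves to the reader. What the paper's route buys is brevity and a reusable template: any partition-based fairness result can be transferred the same way. You do not need your positivity fallback.

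There are two small places to tighten.

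First, take $\mu = P_{\rv s}$. Then $\pi p_1 + (1-\pi) p_0 = 1$ and $m = \pi p_1$ hold $\mu$-a.e., where $\pi = P(\rv y = 1)$. So the right-hand side of your odds identity is exactly $\pi/(1-\pi)$. This gives $\pi_t = \pi$ directly, and you no longer need to find a common $s$ for different tasks.

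Second, your item 2 as written only shows that tasks with $\pi_t \in \{0,1\}$ cannot charge $G$. That alone does not rule out a positive-measure set of such tasks, and such a set would still break $\rv y \perp \rv t$. The missing line is that these tasks must charge $G$. Since $p_0 p_1 > 0$ on $G$ and $\mu(G) > 0$, both $P_0(G) > 0$ and $P_1(G) > 0$. Hence under separation every task's score law $\pi_t P_1 + (1-\pi_t) P_0$ charges $G$. This makes the degenerate case empty and the case split unnecessary.
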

\begin{proof}
    This is a direct application of Theorem~\ref{thm:fairness_analogy} to the well-established impossibility theorem in algorithmic fairness~\citep{classic_impossibility_thm}.
\end{proof}


\paragraph{Example} Consider a multi-task EHR model predicting outcomes with different prevalences, such as mortality and hospital readmission. Treating task identity as the partitioning variable reveals that, under the conditions of Lemma~\ref{lemma:impossibility_thm_fms}, sufficiency and separation cannot generally hold simultaneously across tasks. Thus, tradeoffs familiar from algorithmic fairness arise naturally when evaluating foundation models across heterogeneous prediction tasks.

The novelty of this section is therefore not a new impossibility theorem in
itself, but rather the observation that known fairness-style tradeoffs can be
reinterpreted as limitations of cross-task evaluation, and hence apply not
only to foundation models but to multi-task evaluation more broadly.


\bibliography{refs}
\bibliographystyle{tmlr}

\newpage
\appendix

\section{Notation and Assumptions}
\label{sec:notation}
\paragraph{Sets}
\begin{enumerate}
    \item We will use the notation $\R$ to refer to the real numbers, $\R_+$ to refer to the strictly positive real numbers, and $\overline{\R}$ to refer to the extended reals including $\pm \infty$.
\end{enumerate}

\paragraph{Typefaces / Styling}

\begin{enumerate}
    \item We will use a standard typeface ($x$) to denote ordinary variables or functions. These may be scalars or vectors or functions depending on context, as that distinction will be less essential for our purposes.
    \item We will use a calligraphic font ($\X$) to denote generic, measurable sets, which we will typically use as the domain over which we will construct random variables. 
    \item We will use a sans-serif typeface ($\rv x$) to denote random variables. As a general rule, if we define multiple random variables, they should be understood to be jointly defined, even when that joint relationship has not yet been defined.
    \item We will use the monospace typeface $\proc{E}$ to represent a ``procedure'', which is, mathematically, simply a function like any other, but we will denote it differently to reflect functions that will often be realized through more traditional computational algorithms rather than models that we might train via ML/AI methods, which we will denote with the more traditional notation $f$ or $g$.
\end{enumerate}

\paragraph{Probability}
\begin{enumerate}
    \item Given a set or boolean relationship $f:\X \to \{\text{True}, \text{False}\}$, we will use the notation $\ProbOf{\rv x}{f}$ to represent the ordinary probability that $x \in f^{-1}(\text{True})$.
    \item We will use the notation $\ProbDists{\Y}$ to denote the set of all probability density/measure functions over the set $\Y$. Note that if we state that $q \in \ProbDists{\Y}$, that does not necessarily mean that we have easy ways to perform all operations over $q$, such as sample or compute a conditional or expectation, but we will assume the ability to evaluate the probability law on given inputs.
    \item We will use the notation $x \sim \rv x$ to denote that the value $x$ is drawn from the random variable $\rv x$. At times, we may also use the notation $x \sim p$ for some $p \in \ProbDists{\X}$ to denote that $x$ is drawn from a random variable with probability density/mass function $p$. 
    \item Unless otherwise specified, we will use $\p{\rv x}$ to denote the probability mass or density function of a random variable.
    \item We will write $\mathcal L(\rv x)$ for the law (probability distribution) of a random variable $\rv x$, and $\mathcal L(\rv x \mid E)$ or $\mathcal L(\rv x \mid \rv w = w, E)$ for its conditional law given an event $E$ of positive probability (and, possibly, another random variable $\rv w$).
    \item Given a function $f$ over the domain $\X$, we will use the notation $\EV{\rv x}{f(x)}$ or $\EV{\rv x}{f}$ to denote the expectation of a function $f$ that maps samples of $\rv x$ to a measurable output space over the random variable $\rv x$. In the case that the random variable or distribution does not clearly delineate the function's input variable, we can use the expanded notation $\EV{t \sim \rv x}{f(t)}$ to indicate how the input function is applied to samples drawn from the random variable.
    \item Given a suitably nice function $f$ over a domain $\X$ and an $\X$ valued random variable $\rv x$, we will let $f(\rv x)$ denote the random variable given by applying the function $f$ to samples drawn from $\rv x$. Relatedly, if $f: \X\to \Y$ and $q \in \ProbDists{\X}$ is a probability distribution over $\X$, then we will use the pushforward notation $\pushforward{f}{q} \in \ProbDists{\Y}$ to denote the distribution obtained were one to sample from $q$ then apply $f$ (equivalent to the distribution of $f(\rv x)$ if $\rv x \sim q$).
    \item We will use a vertical bar to indicate conditional relationships, applied both to random variables (e.g., $\gv{\rv x}{\rv t = t}$ indicates the conditional random variable of $\rv x$ given the random variable $\rv t$ takes value $t \in \T$, defined over a joint distribution over $\rv x, \rv t$) or probability density functions (e.g., $\p{\gv{\rv x}{\rv t = t}}(x)$).
    \item Extending the conditional notation above to events, given $q \in \ProbDists{\Y}$ and a measurable set $S \subseteq \Y$ with $q(S) > 0$, we write $q(\cdot \mid S) \in \ProbDists{\Y}$ for $q$ conditioned on $S$, i.e., $q(A \mid S) := q(A \cap S)/q(S)$ for any measurable $A \subseteq \Y$.
    \item When it is clear from letter identity, we may simplify our conditional notation through the use of a subscript -- e.g., let $\rv x_t = \gv{\rv x}{\rv t = t}$. If we do this for multiple random variables over the same conditional, they should be understood to be considered jointly conditioned with respect to the given variable, not two independently conditioned random variables -- e.g., if we discuss $\rv x_t, \rv y_t$, this is equivalent to $\gv{\rv x, \rv y}{\rv t = t}$.
    \item When used without a target of the conditional, e.g., $\gv{\rv x}{\rv t}$, this should be understood to be a stochastic kernel from $\T$ to the space of probability distributions over $\X$, such that on input $t \in \T$, this evaluates to the typical $\gv{\rv x}{\rv t = t}$.
    \item Given an $\X$-valued random variable $\rv x$, we will let $\esssup_{\rv x}$ denote the essential supremum over $x \sim \rv x$, which we can recall is defined for any measurable function $f: \X \to \R$ to be
    \[
        \esssup_{\rv x} f := \inf \left\{ M \in \overline{\R} : \ProbOf{\rv x}{f(x) > M} = 0 \right\}.
    \]
    Informally, this is the supremum over the values that $f$ can take on $\rv x$ that will be observed with non-zero probability.
\end{enumerate}

\paragraph{Information Theory}
\label{subsec:information_theory_notation}
\begin{enumerate}
    \item We will use the traditional notation $\rho:\kernelover{\T}{\RR}$ to denote that $\rho$ defines a jointly measurable Markov kernel from $\T$ to $\RR$ (i.e., a stochastic function), such that we could introduce a random variable $\rv r_t$ for each $t \in \T$ drawn from $\rv r_t \sim \rho(t)$. Given a random variable $\rv t$ over $\T$, then we can further consider $\rho$ to induce a random variable $\rv r$ (without a subscript) over $\RR$ such that the conditional distribution given $\rv t = t$ is defined precisely by $\rho$. In this case, we can see that $\rho$ is precisely $\gv{\rv r}{\rv t}$ as introduced previously. Similarly, in this context the use of $\rv r_t$ is consistent with our simplified subscript form for conditionals discussed above. We will also use the notation $\MarkovKernels{\rv x}{\rv y}$ to represent the set of all possible Markov kernels from $\X$ to $\Ybar$ equipped with the measures induced by the given random variables. At times we may use the notation $\MarkovKernels{\X}{\Ybar}$ equivalently when the measures are clear in context.
    \item We will use the variable $\ell$ to represent a strictly proper scoring loss for $\Y$, such that $\ell: \ProbDists{\Ybar} \times \Ybar \to \overline{\R}$. We permit scoring rules and generalized entropies to take extended-real values and assume all relevant expectations and differences are well defined and finite whenever invoked.
    \item Given a proper scoring rule $\ell$ and a random variable $\rv y$ over $\Ybar$, we will let the generalized entropy $h_\ell(\rv y) = \inf_{q \in \ProbDists{\Ybar}} \EV{\rv y}{\ell(q, y)} = \EV{\rv y}{\ell(\p{\rv y}, y)}$ (where the last equality is guaranteed as $\ell$ is proper). In the event that $\ell$ is the log loss, then this is the Shannon entropy, $h_\ell = H$.
    \item To produce a generalized notion of conditional $\ell$ entropy, we simply compute the $\ell$ entropy over the conditional random variable in expectation over all possible conditioned values: $h_\ell(\gv{\rv x}{\rv t}) = \EV{\rv t}{h_\ell(\gv{\rv x}{\rv t = t})}$. Note that this conditional entropy is \emph{exactly} the score assigned by $\ell$ for a Bayes optimal predictor when predicting $\rv x$ from $\rv t$ over the joint distribution of $\rv x, \rv t$. Similarly, we define the conditional generalized $\ell$-information as $I_\ell(\rv y;\rv x\mid\rv t) := h_\ell(\gv{\rv y}{\rv t}) - h_\ell(\gv{\rv y}{\rv t,\rv x}) \ge 0.$
    \item Similarly, we can define the associated regret divergence for $\ell$ over two probability distributions $p, q \in \ProbDists{\Ybar}$ as $\divergence{\ell}{p}{q} := \EV{\rv y \sim p}{\ell(q, y) - \ell(p, y)} \ge 0$ by propriety. In the case of the log loss, this is the normal KL-divergence. At a slight risk of an abuse of notation, if $\rv y$ and $\hat{\rv y}$ are two $\Ybar$ valued random variables, we can use the random variables rather than probability laws to reflect the expected quantity: 
        $\divergence{\ell}{\rv y}{\hat{\rv y}} := \EV{\rv y}{
            \ell(\p{\hat{\rv y}}, y) - \ell(\p{\rv y}, y)
        }$.
    \item We can take the typical convention of supporting divergences between conditional kernels as reflecting an expectation over the conditioning value in a context where all variables are jointly defined: $\divergence{\ell}{\rv y | \rv x}{\hat{\rv y} | \rv x} := \EV{\rv x}{\divergence{\ell}{\rv y | \rv x = x}{\hat{\rv y} | \rv x = x}}$. We will also take as the convention that $\condDivergence{\ell}{\rv y}{\hat{\rv y}}{\rv x} := \divergence{\ell}{\rv y | \rv x}{\hat{\rv y} | \rv x}$, to simplify notation.\footnote{Note that this notation is unambiguous as we cannot take the divergence between a distribution over $\Y$ alone and a conditional kernel in $\MarkovKernels{\X}{\Y}$, which is what the interpretation of this expression as being a divergence between $(\rv y)$ and $(\rv y | \rv x)$ would imply.}
    \item As these divergences are Bregman regrets, we can also recall the three-point rule, which relates the pairwise divergences between three random variables over the same space via a (signed) Bregman interaction term. Concretely, if $\rv w$, $\rv u$, and $\rv v$ are all random variables over $\Ybar$, all jointly distributed with $\rv x$ over a potentially different space, then we can define the unconditioned and conditioned Bregman interaction terms via:
    \begin{align*}
    \bregman{\ell}{\rv w}{\rv u}{\rv v}
          &:= \divergence{\ell}{\rv w}{\rv v} - (\divergence{\ell}{\rv w}{\rv u} + \divergence{\ell}{\rv u}{\rv v}) \\
    \condBregman{\ell}{\rv w}{\rv u}{\rv v}{\rv x}
          &:= \condDivergence{\ell}{\rv w}{\rv v}{\rv x} - (\condDivergence{\ell}{\rv w}{\rv u}{\rv x} + \condDivergence{\ell}{\rv u}{\rv v}{\rv x})
    \end{align*}
    Note that in the differentiable case, these Bregman interaction terms correspond explicitly to geometric quantities that can be independently expressed, but this interpretation will not be essential for us here.
    \item We can further define the generalized mutual information for $\ell$ via:
    \begin{align*}
        I_\ell(\rv y ; \rv x)
          &= h_\ell(\rv y) - h_\ell(\gv{\rv y}{\rv x}) \\
          &= \EV{\rv x}{\divergence{\ell}{\p{\gv{\rv y}{\rv x = x}}}{ \p{\rv y}}}.
    \end{align*}
    which is the improvement in Bayes-optimal predictive loss under $\ell$ from using $\rv x$ to predict $\rv y$ vs. using $\rv y$ alone.

    \item Given two kernels, $k_1: \kernelover{\X}{\Ybar}$ and $k_2: \kernelover{\Ybar}{\ZZ}$, we will use the term ``stochastic push-forward'' of $k_1$ through $k_2$ to refer to the composition $k_2 \circ k_1: \kernelover{\X}{\ZZ}$ such that $k_2 \circ k_1: x \mapsto \EV{y \sim k_1(x)}{k_2(y)}$. We extend this notation in two ways. First, we identify a distribution $p \in \ProbDists{\Ybar}$ with the constant kernel from a one-point space, so that $k_2 \circ p := \EV{y \sim p}{k_2(y)} \in \ProbDists{\ZZ}$. Second, given a $\Ybar$-valued random variable $\rv y$, we write $k_2 \circ \rv y$ for the $\ZZ$-valued random variable obtained by drawing from $k_2(\rv y)$, jointly distributed with all others and conditionally independent of them given $\rv y$; its law is $k_2 \circ \p{\rv y}$, and, by this conditional independence, $\gv{(k_2 \circ \rv y)}{\rv x} = k_2 \circ (\gv{\rv y}{\rv x})$ for any other random variable $\rv x$. When $k_2$ is induced by a measurable function $g : \Ybar \to \ZZ$, i.e., $k_2(y) = \delta_{g(y)}$, these reduce to $\pushforward{g}{k_1(x)}$, $\pushforward{g}{p}$, and the random variable $g(\rv y)$ (with law $\pushforward{g}{\p{\rv y}}$), respectively.
    
    \item We can further define the notion of an $\ell$-deficiency, in the sense of statistical deficiencies in stochastic channels, a la~\citet{deficiency_def,deficiency_def_2}. Let $\rv x, \rv y, \rv z$ be jointly defined random variables taking values in $\X, \Ybar, \ZZ$, respectively. Note that the typical conditional distributions $\rv y | \rv x$ and $\rv z | \rv x$ form kernels $k_{x\to y}: \kernelover{\X}{\Ybar}$ such that $k_{x\to y}(x) = \p{\rv y | \rv x = x}$ and $k_{x \to z}: \kernelover{\X}{\ZZ}$ maps $x \mapsto \p{\rv z | \rv x = x}$.

    We define the $\ell$ deficiency of $\rv z$ relative to $\rv y$ to be the minimum possible error that could be obtained were one to predict $\rv y | \rv x$ through the action of any stochastic kernel in $\MarkovKernels{\ZZ}{\Ybar}$:
    \begin{align*}
    \condDeficiency{\ell}{\rv y}{\rv z}{\rv x}
        &:= \deficiency{\ell}{k_{x \to y}}{k_{x \to z}} \\
        &= \inf_{\lambda \in \MarkovKernels{\ZZ}{\Ybar}}
        \condDivergence{\ell}{\rv y}{\lambda \circ \rv z}{\rv x} \\
        &= \inf_{\lambda \in \MarkovKernels{\ZZ}{\Ybar}}
        \EV{\rv x}{
        \divergence{\ell}{\p{\gv{\rv y}{\rv x = x}}}{\lambda \circ k_{x \to z}(x)}
        }.
    \end{align*}

\end{enumerate}

\paragraph{Functions}
\begin{enumerate}
    \item We will slightly modify the traditional notation $\fnset{\X}{\Ybar}$ to denote not the set of functions from $\X$ to $\Ybar$, but instead, in cases where $\X$ and $\Ybar$ are measurable, which will nearly universally be true in this work, the set of measurable functions from $\X$ to $\Ybar$, $\mathrm{Meas}(\X, \Ybar)$. We will also use the notation $g: \X \to \Ybar$ as is typical to denote that $g \in \fnset{\X}{\Ybar}$ (again assuming measurability where applicable).
\end{enumerate}

\paragraph{Assumptions}
In general, where applicable, we will assume that sets/spaces are not only measurable, but are standard Borel spaces, and assume that random variables / probability distributions defined in the context of a known joint can be used interchangeably (e.g., we can identify a random variable with its probability distribution), that conditionals are well defined, and that statements made will be assumed to hold only almost surely (a.s.) in the probabilistic sense unless otherwise specified.

\section{Any-task vs. Zero-shot}
\label{appendix:any-task-vs-zero-shot}
We mentioned in the Introduction that we use the term ``any-task inference'' rather than ``zero-shot inference'' to avoid confusion, but here we can actually say more formally that these two requirements are very similar, and arguably identical in all domains where $\rv t$ is a continuous random variable everywhere. Concretely, in such a case, if we satisfy the two restrictions given above (can support inference for any task, no fine-tuning), then every evaluation we perform will be on tasks $\rv t$ that were almost surely\footnote{Meaning that the statement holds with probability one.} never seen during training (by continuity of $\rv t$), implying that every task we assess will be assessed in a zero-shot manner. In a setting where $\rv t$ is not continuous (e.g., $\T$ is finite), then this may not hold. However, in this setting, the notion of ``true zero-shot capabilities'' (without assuming some notion of distribution shift to a new space of task $\T$) is somewhat strange, as the space of tasks is fixed and finite, so there is nothing preventing a model from literally memorizing every single task that could ever exist directly with no risk of generalization error. Regardless, we use the term ``any-task inference'' here to focus squarely on the evaluation side, rather than on the training side, and avoid any confusion on this point.

\section{Foundation Models vs. Supervised, Single-task Models}
\label{appendix:fms_v_supervised_learning}
Definition~\ref{def:foundation_model} implies the trivial, known results that the best possible foundation model has the same Bayes-optimal solution as the best possible supervised, single-task model for a given task $t \in \T$, and thus has the same optimal error/performance. This is an obvious result, but it is worth stating explicitly to dispel any confusion that foundation models are somehow better than supervised learning models at optimality. Instead, their advantage must come in efficiency, learnability, flexibility, or along some other axis in practice.

In fact, we can go one step further than this. Suppose we have $\T' \subseteq \T$ also measurable, with $\rv t'$ defined such that $\rv t | t \in \T'$, and $\rv x, \rv y$ as usual, yielding two possible domains, $D = (\X, \T, \Y)$ and $D' = (\X, \T', \Y)$ with the usual associated random variables. Further, suppose $\mathcal F \subseteq \fnset{\X \times \T}{\ProbDists{\Y}}$ be some function class of interest within the space of all foundation models for $D$ and $\mathcal F' \subseteq \fnset{\X \times \T'}{\ProbDists{\Y}}$ be a function class of interest within the space of foundation models for $D'$. For example, suppose that these function classes are defined both by operational constraints (e.g., neural networks with no more than a certain number of parameters), but also by ``admissibility'' constraints based on performance (e.g., the resulting foundation model will only be used if it achieves a certain average performance across all tasks within the domain, and otherwise it would not be considered successful). 

In this setting, if we were to consider a foundation model over $D$ to be ``permitted'' in general, it must be within $\mathcal F$. If we were to then evaluate this model over $\T' \subseteq \T$ alone and determine if it is ``permitted'' for the restricted domain $D'$, we would make this conclusion based on whether or not it is also within $\mathcal F'$. As such, we can say that a foundation model over $D$ then considered as a solution for $D'$ would be allowed only if it lives within $\mathcal F \cap \mathcal F' \subseteq \mathcal F'$. This therefore trivially implies that were we to consider the set of all foundation models permitted under $D'$ alone vs. those permitted under $D$ and $D'$, the best possible performance of a model over $D'$ alone must be at least as good as that of one permitted under both $D$ and $D'$.

Concretely, as an example, this implies that if we train a single task model for a task $t \in \T$ under any acceptance criteria (e.g., it must obtain a certain performance and have a certain capacity limitation), and alternatively train a foundation model over $\T$, asserting that it must likewise satisfy the capacity limitation and attain a similar performance on $t$, but also satisfy some more general constraints due to its ``foundation model'' nature, then it is theoretically clear that it is impossible for the optimal such foundation model to outperform the optimal such single-task model. Instead, any performance difference we hope to achieve between single-task and foundation models must be accompanied by differences in prescribed criteria, such as permitting the foundation model to leverage additional capacity.
\end{document}